\theoremstyle{definition}
\newtheorem{theorem}{Theorem}[section]
\newtheorem{proposition}[theorem]{Proposition}
\newtheorem{lemma}[theorem]{Lemma}
\newtheorem{corollary}[theorem]{Corollary}
\theoremstyle{definition}
\newtheorem{definition}[theorem]{Definition}
\theoremstyle{remark}
\newtheorem{remark}[theorem]{Remark}
\icmltitlerunning{Compositional Curvature Bounds for Deep Neural Networks}
\begin{document}

\twocolumn[
\icmltitle{Compositional Curvature Bounds for Deep Neural Networks}



\icmlsetsymbol{equal}{*}

\begin{icmlauthorlist}
\icmlauthor{Taha Entesari}{equal,jhu}
\icmlauthor{Sina Sharifi}{equal,jhu}
\icmlauthor{Mahyar Fazlyab}{jhu}
\end{icmlauthorlist}

\icmlaffiliation{jhu}{Department of Electrical and Computer Engineering, Johns Hopkins University, Baltimore, United States of America}


\icmlcorrespondingauthor{Mahyar Fazlyab}{mahyarfazlyab@jhu.edu}

\icmlkeywords{Machine Learning, Curvature Estimation, Hessian Bounding, Second-order Robustness, Certified Robustness, Adversarial Robustness}

\vskip 0.3in
]



\printAffiliationsAndNotice{\icmlEqualContribution} 

\begin{abstract}
    %
    %
    %
    %
    A key challenge that threatens the widespread use of neural networks in safety-critical applications is their vulnerability to adversarial attacks. In this paper, we study the second-order behavior of continuously differentiable deep neural networks, focusing on robustness against adversarial perturbations. First, we provide a theoretical analysis of robustness and attack certificates for deep classifiers by leveraging local gradients and upper bounds on the second derivative (curvature constant). Next, we introduce a novel algorithm to analytically compute provable upper bounds on the second derivative of neural networks. This algorithm leverages the compositional structure of the model to propagate the curvature bound layer-by-layer, giving rise to a scalable and modular approach. The proposed bound can serve as a differentiable regularizer to control the curvature of neural networks during training, thereby enhancing robustness. Finally, we demonstrate the efficacy of our method on classification tasks using the MNIST and CIFAR-10 datasets.
\end{abstract}

\section{Introduction}
Neural networks are infamously prone to adversarially designed perturbations \cite{szegedy2013intriguing}.
To address this vulnerability, many methods have been proposed to quantify and improve the robustness of these models against adversarial attacks, such as adversarial training \cite{zhang2019theoretically, madry2017towards}, regularization \cite{leino2021globally, tsuzuku2018lipschitz}, randomized smoothing \cite{cohen2019certified, kumar2021policy}, and many others.
One measure of robustness is the Lipschitz constant defined as the smallest $L_f \geq 0$ such that
\[
\|f(x)-f(y)\| \leq L_f \|x-y\| \quad \forall x,y.
\]
This constant quantifies the sensitivity of the model $f$ to input perturbations, motivating the need to estimate $L_f$ and control it through architecture or the training process.

For continuously differentiable functions, $L_f$ is a tight upper bound on the \emph{first} derivative ($\|Df(x)\| \leq L_f \ \forall x$). However, one can go one step further and leverage the smoothness of the first derivative, i.e., bounds on the \emph{second} derivative to obtain a more refined measure of the function's sensitivity. Indeed, the merit of second-order information in characterizing and enhancing robustness has been established, e.g., \cite{singla2020second}.


\textbf{Our Contributions:} In this work, we seek to characterize the adversarial robustness of continuously differentiable neural network classifiers through the Lipschitz constant of their first derivative defined as 
\[
\|Df(x)-Df(y)\| \leq L_{Df} \|x-y\| \quad \forall x,y
\]
If $f$ is twice differentiable,  this constant is a tight upper bound on the second derivative, $\|D^2f(x)\| \leq L_{Df} \  \forall x$. With a slight abuse of the formal definition, we denote $L_{Df}$ as the ``curvature'' constant. Our contributions are as follows.




\begin{itemize}[leftmargin=*]
    \item We provide a theoretical analysis of the interplay between adversarial robustness and smoothness. Specifically, for classification tasks, we derive lower bounds on the margin of correctly classified data points using the first derivative (the Jacobian) and its Lipschitz constant (the curvature constant). We then show that these curvature-based certificates provably improve upon Lipschitz-based certificates, provided the curvature is sufficiently small. 
    
    \item We propose a novel algorithm to derive analytical upper bounds on the curvature constant of neural networks. This algorithm leverages the compositional structure of the model to compute the bound in a scalable and modular fashion, improving upon previous works that only consider scalar-valued networks with affine-then-activation architectures. The derived bound is differentiable and can be used as a regularizer for training low-curvature neural networks.
    \item We introduce a relaxed notion of smoothness, the \textit{Anchored Lipschitz} constant, which significantly reduces conservatism in terms of robustness certification. Succinctly, this definition fixes one of the two points involved in the definition of Lipschitz continuity to the point of interest.
    \item We also present empirical results demonstrating the performance of our method compared to previous works on calculating curvature bounds, and we examine the impact of low curvatures on the robustness of deep classifiers. 
\end{itemize}

To the best of our knowledge, this paper is the first to develop a method for obtaining provable bounds on the second derivative of \textit{general} sequential neural networks. While we consider adversarial robustness as an application domain, the proposed method is also of independent interest for other applications requiring differentiable bounds on the second derivative of neural networks, such as learning-based control for safety-critical applications \cite{robey2020learning}.

\subsection{Related Work}
    With respect to the large body of work in this field, here, we focus on the works that are more relevant to our setup.
    \paragraph{Adversarial Robustness:}
    The robustness of deep models against adversarial perturbations has been a topic of interest in recent years \cite{singla2021skew, singla2022improved, xu2023exploring, zou2023universal}. 
    \cite{huang2021training, fazlyab2023certified} use the Lipschitz constant of the network during the training procedure to induce robustness by bounding the worst-case logits. 
    To achieve robustness, instead of penalizing or constraining the Lipschitz constant during training, some methods directly construct 1-Lipschitz networks.
    The use of Lipschitz bounded networks has been encouraged by many recent works \cite{bethune2022pay} as they provide desirable properties such as robustness and improved generalization.
    AOL \cite{prach2022almost} provides a rescaling of the layer weights that makes each linear layer 1-Lipschitz.
    To obtain Lipschitz bounded networks, many works have utilized LipSDP \cite{fazlyab2019efficient} to parameterize 1-Lipschitz layers.
    SLL \cite{araujo2023unified} proposes 1-Lipschitz residual layers by satisfying LipSDP, and \cite{fazlyab2023certified} generalizes SLL by proposing a $\sqrt{\rho}$-Lipschitz layer.
    Most recently, \cite{wang2023direct} satisfies the LipSDP condition using Caley Transforms and proposes a non-residual 1-Lipschitz layer.

    Other works look beyond the network's first-order properties and control the network's curvature \cite{moosavi2019robustness, singla2021low}. \cite{srinivas2022efficient} proposes using centered-soft plus activations and Lipschitz-bounded batch normalizations to cap the curvature and empirically improve robustness.
    
    %
    \paragraph{Lipschitz Constant Calculation:}
    In recent years, there has been a focus on finding accurate bounds on the Lipschitz constant of neural networks. Here we only discuss the ones that can handle continuously-differentiable networks. One of the early works, \cite{szegedy2013intriguing}, provided a bound on the Lipschitz constant using the norm of each layer, which is known to be a loose bound. \cite{fazlyab2019efficient} formulated the problem of finding the Lipschitz constant as a semidefinite program (SDP), providing accurate bounds but at the expense of limited scalability. Later, \cite{hashemi2021certifying} introduced a local version of LipSDP. Most recently, \cite{fazlyab2023certified} proposed LipLT, an analytic method for bounding the Lipschitz constant through loop transformation, a control-theoretic concept. In this work, we also leverage LipLT to derive upper bounds on the curvature constant.
    
    Most relevant to our setup, \cite{singla2020second} develops a method to bound the curvature constant of scalar-valued neural networks in the \(\ell_2\) norm and introduces a numerical optimization scheme to provide curvature-based certificates. In contrast, our method bounds the curvature of arbitrary function compositions, in particular vector-valued feedforward neural networks, in any \(\ell_p\) norm, and provides analytical curvature-based certificates.

    %
    

\subsection{Preliminaries and Notation}
We denote the $n$-dimensional real numbers as $\mathbb{R}^n$. 
For a vector $x \in \mathbb{R}^n$, $x_i$ is its $i$-th element. 
For a matrix $W \in \mathbb{R}^{n \times m}$, $W_{i, :} 
 \in \mathbb{R}^{1 \times m}, W_{:, j} \in \mathbb{R}^{n}, W_{i, j} \in \mathbb{R}$ are the $i$-th row, $j$-th column, and the $j$-th element of $W_{i, :}$, respectively. For a vector $x$, $\mathrm{diag}(x)$ is the diagonal matrix with $\mathrm{diag}(x)_{ii} = x_i$ and zero otherwise. For an integer $n$ let $[n] = \{1, \cdots, n\}$.
Moreover, the operator norm of a matrix $A$ is denoted as
    $
    \|A\|_{p\rightarrow q} = \sup_{\|x\|_p \leq 1} \|Ax\|_q
    $.
For a real-valued $p \geq 1$, we denote its H$\mathrm{\ddot{o}}$lder conjugate with ${p^*}$, i.e., $\frac{1}{p} + \frac{1}{{p^*}} = 1$. For any vector $x \in \mathbb{R}^n$ and norm $\|\cdot\|_p$, we have $\|x\|_{p^*} = \sup_{\|y\|_p \leq 1} x^\top y$. 

A function $f: \mathbb{R}^n \rightarrow \mathbb{R}^m$ is Lipschitz continuous on $\mathcal{C} \subseteq \mathbb{R}^n$ if there exists a non-negative constant $L_f^{p, q}$ such that 
$\| f(x) - f(y)\|_q \leq L_f^{p, q} \| x - y \|_p \ \forall x,y \in \mathcal{C}$.  The smallest such $L_f^{p, q}$ is \emph{the} Lipschitz constant, in the corresponding norms, which is given by
\[
L_f^{p, q} = \sup_{x,y \in \mathcal{C}, x \neq y} \frac{\| f(x) - f(y)\|_q }{ \| x - y \|_p}.
\]
For brevity, we denote $L_f^{p, p}$ as $L_f^p$.
In this work, we define a new notion of Lipschitz continuity at a neighborhood of a point. 
\begin{definition}[Anchored Lipschitz constant]
    For a function $f$, the \emph{anchored} Lipschitz constant at a point $x \in \mathcal{C}$ is defined as
    \[
    L_f^{p, q}(x) = \sup_{y \in \mathcal{C}, x \neq y} \frac{\| f(x) - f(y)\|_q }{ \| x - y \|_p}.
    \]
\end{definition}
At any point $x$, this constant 
is a lower bound on the Lipschitz constant as one can confirm $L_{f}^{p, q} = \sup_{x \in \mathcal{C}} L_{f}^{p, q}(x)$.
\Cref{fig:anch} demonstrates this concept further for the specific case of the $\tanh$ function. 

In the following lemma, we establish the relation between the anchored Lipschitz constant and the norm of the derivative.
\begin{lemma}\label{lemma:lipschitzUpperBoundOnNormJacobian}
    Consider a differentiable function $f: \mathbb{R}^n \to \mathbb{R}^m$ and let $L_f^{p}(x)$ be a corresponding anchored Lipschitz constant. We have
    \[
    \|Df(x)\|_p \leq L_f^p(x).
    \]    
\end{lemma}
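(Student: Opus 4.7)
The plan is to reduce the claim to the differentiability of $f$ at $x$ by specializing the definition of $L_f^p(x)$ to perturbations along an arbitrary direction and letting the perturbation size go to zero. Writing the operator norm as $\|Df(x)\|_p = \sup_{v \neq 0} \|Df(x) v\|_p / \|v\|_p$, it suffices to fix an arbitrary nonzero direction $v$ and bound $\|Df(x) v\|_p / \|v\|_p$ by $L_f^p(x)$.

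First I would pick $v \in \mathbb{R}^n \setminus \{0\}$ and consider the family of points $y_t = x + t v$ for small $t > 0$. Assuming $x$ lies in the interior of $\mathcal{C}$ (or, more generally, that $v$ is an admissible direction into $\mathcal{C}$), $y_t \in \mathcal{C}$ for all sufficiently small $t > 0$, and $y_t \neq x$. By differentiability of $f$ at $x$,
\[
f(y_t) - f(x) = t\, Df(x) v + o(t),
\]
so that $\|f(y_t) - f(x)\|_p = t\, \|Df(x) v\|_p + o(t)$ by the triangle inequality (and homogeneity of the norm), while $\|y_t - x\|_p = t\, \|v\|_p$.

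Next, I would plug $y_t$ into the definition of the anchored Lipschitz constant to get
\[
\frac{\|f(y_t) - f(x)\|_p}{\|y_t - x\|_p} = \frac{\|Df(x) v\|_p}{\|v\|_p} + \frac{o(t)}{t\,\|v\|_p} \;\leq\; L_f^p(x).
\]
Passing to the limit $t \to 0^+$, the $o(t)/t$ term vanishes, yielding $\|Df(x) v\|_p / \|v\|_p \leq L_f^p(x)$. Taking the supremum over $v \neq 0$ gives $\|Df(x)\|_p \leq L_f^p(x)$, as desired.

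The main obstacle is essentially bookkeeping rather than mathematics: one has to justify that $y_t$ remains in the domain $\mathcal{C}$ for small $t$, which is immediate if $\mathcal{C}$ is open or $x$ is interior. If $\mathcal{C}$ has nonempty interior and $f$ extends smoothly to an open neighborhood of $\mathcal{C}$ (as will be the case for the neural networks considered later in the paper), this concern disappears. Otherwise one restricts $v$ to the tangent cone of $\mathcal{C}$ at $x$ and notes that the operator norm is still recovered by taking a supremum over such admissible directions for reasonable $\mathcal{C}$.
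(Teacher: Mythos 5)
Your proof is correct and follows essentially the same route as the paper's: both arguments bound the directional derivative $\|Df(x)v\|_p$ by applying the anchored Lipschitz inequality to the difference quotient along $x+tv$ and passing to the limit $t\to 0$, then take a supremum over directions to recover the operator norm. The only cosmetic difference is that you phrase the limit via the first-order Taylor expansion with an $o(t)$ remainder, while the paper writes it as the derivative of $t\mapsto f(x+td)$ at $t=0$ and uses continuity of the norm; your extra remarks on keeping $y_t$ inside $\mathcal{C}$ are a reasonable piece of bookkeeping the paper leaves implicit.
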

See \Cref{par:radiiComp} for the proof.

Given bounded numbers $\alpha \leq \beta$, a function $\phi: \mathbb{R} \rightarrow \mathbb{R}$ is slope restricted in $[\alpha, \beta]$ if $$\alpha \leq \frac{\phi(x) - \phi(y)}{x - y} \leq \beta, \quad \forall x, y.$$
The Lipschitz constant of $\phi$ is then $L_\phi = \max(|\alpha|,|\beta|)$.
For simplicity, and based on commonly-used differentiable activation functions such as sigmoid and tanh, we assume that $\phi$ is monotone, i.e., $\alpha \geq 0$, implying that $L_\phi = \beta$.
%


\section{Curvature-based Robustness Analysis}\label{sec:curvatureBasedAnalysis}

Consider a continuously differentiable function $f: \mathbb{R}^{n} \rightarrow \mathbb{R}^{m}$ parameterized by a neural network. In this work, our goal is to derive provable upper bounds on the Lipschitz constant of the Jacobian $Df: \mathbb{R}^{n} \rightarrow \mathbb{R}^{m \times n}$, defined as the smallest constant $L_{Df}^{p, q}$ such that
\begin{align}
    &\|Df(x_1) \!-\! Df(x_2) \|_q \leq L_{Df}^{p, q} \| x_1 \!-\! x_2 \|_p, \quad \forall x_1,x_2. \notag
\end{align}
Furthermore, we can extend this to the anchored Lipschitz constant of the Jacobian,  $L_{Df}^{p, q}(x)$, at a given point $x$, as
\begin{align}
    &\|Df(x + \delta) \!-\! Df(x) \|_q \leq L_{Df}^{p, q}(x) \| \delta \|_p, \quad \forall \delta. \notag
\end{align}

While providing provable upper bounds on these constants can be instrumental in various applications, in this work, we primarily focus on the adversarial robustness of deep classifiers. We develop our methods and certificates based on the Lipschitz continuity of the classifier and its Jacobian. We elaborate more on this in the following subsections.

\subsection{Robustness Certificates for Deep Classifiers}
Consider a classifier $C(x):=\arg\max_{1 \leq i \leq n_K} f_i(x)$ with $n_{K}$ classes, where $f: \mathbb{R}^{n_0} \rightarrow \mathbb{R}^{n_K}$ is a neural network that parameterizes the vector of logits. 
For a given input $x$ with correct label $y \in [n_K]$, the condition for correct classification is 
\[
f_{iy}(x) := f_i(x) - f_y(x)<0, \quad \forall i\neq y.
\]
Assuming that $x$ is correctly classified as $y$, the distance of $x$ to the closest decision boundary measures the classifier's local robustness against additive perturbations. We can compute this distance by solving the following optimization problem,
%
%
\begin{align}\label{eq:defenseRadius}
    \begin{split}
        \varepsilon^*(x) = \max &\quad \varepsilon \\
        \text{s.t.} &\quad \sup_{\| \delta \|_p \leq \varepsilon} f_{iy}(x + \delta) \leq 0, \quad \forall i \neq y.
    \end{split}
\end{align}

For ReLU networks and $p \in \{1,\infty\}$, this optimization problem can be encoded as a Mixed-Integer Linear program (MILP) by exploiting the piece-wise nature of the activation functions \cite{dutta2018output, fischetti2018deep, tjeng2017evaluating}. While these MILPs can be solved globally, they suffer from poor scalability. For neural networks with differentiable activation functions, even this mixed-integer structure is absent, making the exact computation of distances effectively intractable. Therefore, we must resort to finding lower bounds on the certified radius to gain tractability.

\subsubsection{Lipschitz-based certificates} Suppose the $f_{iy}$'s are Lipschitz continuous. We can then write
\begin{equation}\label{eq: anchored Lip for logit difference}
    f_{iy}(x + \delta) \leq f_{iy}(x) + L_{f_{iy}}^p(x) \|\delta\|_p.
\end{equation}
where $L_{f_{iy}}^p(x)>0$ is the \emph{anchored} Lipschitz constant of $f_{iy}$.
By substituting \eqref{eq: anchored Lip for logit difference} in the constraints of \eqref{eq:defenseRadius}, we obtain the following optimization problem to compute a zeroth-order (gradient-free) lower bound,
\begin{align*}
\begin{split}
        \underline{\varepsilon}_0^*(x) = &\max \ \varepsilon \\
        &\text{s.t.} \sup_{\| \delta \|_p \leq \varepsilon} f_{iy}(x) + L_{f_{iy}}^p(x) \|\delta\|_p \leq 0,  \forall i \neq y.
\end{split}
\end{align*}
We note that due to constraint tightening, we have 
$\underline{\varepsilon}_0^*(x)< {\varepsilon}^*(x)$. Using similar arguments as in \cite{fazlyab2023certified}, $\underline{\varepsilon}_0^*(x)$ has the closed-form expression 
\begin{align}\label{eq:neuripsLipschitzCertificate}
   \underline{\varepsilon}_0^*(x) = \min_{i \neq y}  \frac{-f_{iy}(x)}{L_{f_{iy}}^p(x)}.
\end{align}

\subsubsection{Curvature-based certificates} When the model is continuously differentiable, we can exploit its curvature to improve the certificate in \eqref{eq:neuripsLipschitzCertificate}. Specifically, suppose the logit difference $f_{iy}$ is continuously differentiable with Lipschitz gradients and let $L_{\nabla f_{iy}}^{p, p^*}(x)$ be an anchored Lipschitz constant of $\nabla f_{iy}$ at $x$. Then, we can compute an upper bound on $f_{iy}(x+\delta)$ as follows,
\begin{equation}\label{eq:descentLemma}
    f_{iy}(x + \delta) \leq 
     \underbrace{f_{iy}(x) + \nabla f_{iy}(x)^\top \delta + \frac{L_{\nabla f_{iy}}^{p, p^*}(x)}{2}\|\delta\|_p^2}_{\overline{f_{iy}}(x, \delta; L_{\nabla f_{iy}}^{p, {p^*}}(x))}.
\end{equation}
See \Cref{par:proopOfSecondOrderRadius} for a derivation of this inequality. In contrast to the zeroth-order bound, this upper bound uses the local first derivative, $\nabla f_{iy}(x)$,  as well as bounds on its (anchored) Lipschitz constant, $L_{\nabla f_{iy}}^{p, p^*}(x)$,  to obtain a locally more accurate approximation of $f_{iy}(x+\delta)$. By substituting the upper bound \eqref{eq:descentLemma}  in \eqref{eq:defenseRadius}, we obtain a first-order (gradient-informed) lower bound on $\varepsilon^*(x)$,
\begin{align}\label{eq:defenseRadiusRelaxed1}
        \begin{split}
            \underline{\varepsilon}_1^*(x) = &\max \ \varepsilon \\
        &\text{s.t.} \sup_{\| \delta \|_p \leq \varepsilon} \overline{f_{iy}}(x, \delta; L_{\nabla f_{iy}}^{p, {p^*}}(x)) \leq 0, \quad \forall i \neq y
        \end{split}
\end{align}

\begin{figure}[t]
    \centering
    \includegraphics[width=0.85\columnwidth]{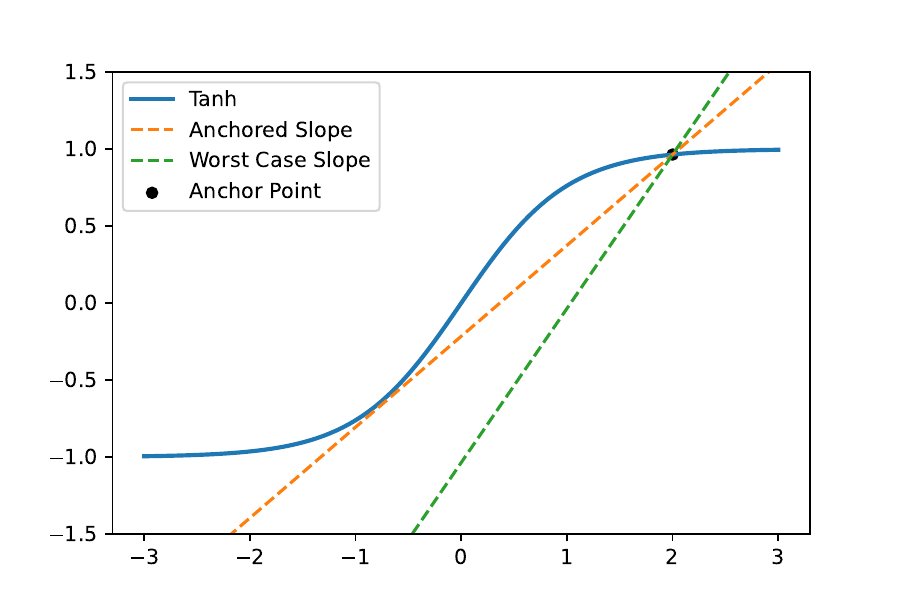}
    \caption{Depiction of anchored Lipschitz constants for $f(x) = \tanh(x)$. The anchored Lipschitz constant at $x=2$ is less than $0.582$, whereas the global Lipschitz constant is $1$.}
    \label{fig:anch}
\end{figure}
As we summarize below, we can compute this lower bound in closed form, provided that we can compute $L_{\nabla f_{iy}}^{p, {p^*}}(x)$. 
\begin{proposition}[Curvature-based certified radius]\label{prop:secondOrderCertifiedRadius} 
    Suppose $x$ is classified correctly, i.e., $f_{iy}(x)<0 \ \forall i \neq y$. The optimization problem \eqref{eq:defenseRadiusRelaxed1} has the closed-form solution $\underline{\varepsilon}_1^*(x)$ given by
    \begin{align}\label{eq:icmlLipschitzCertificate}
        \min_{i\neq y} \! \frac{-\| \nabla f_{iy}(x) \|_{p^*} \! + \! (\| \nabla f_{iy}(x) \|_{p^*}^2 \! - \! 2 L_{\nabla f_{iy}}^{p, {p^*}}(x) f_{iy}(x))^{\frac{1}{2}}}{L_{\nabla f_{iy}}^{p,{p^*}}(x)}.
    \end{align}
\end{proposition}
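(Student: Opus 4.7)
The plan is to fix an index $i\neq y$, evaluate the inner supremum in the constraint of \eqref{eq:defenseRadiusRelaxed1} explicitly, and then reduce the resulting problem to a one-variable quadratic inequality that admits a closed-form solution. Taking the minimum of the per-index radii then yields the claimed formula.

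First, for a fixed $i\neq y$, I would evaluate
\[
\sup_{\|\delta\|_p\le \varepsilon}\overline{f_{iy}}(x,\delta; L_{\nabla f_{iy}}^{p,p^*}(x))
=\sup_{\|\delta\|_p\le \varepsilon}\Bigl[f_{iy}(x)+\nabla f_{iy}(x)^\top\delta+\tfrac{L}{2}\|\delta\|_p^2\Bigr],
\]
where I abbreviate $L:=L_{\nabla f_{iy}}^{p,p^*}(x)$. The first term is constant in $\delta$, and the quadratic term depends on $\delta$ only through $\|\delta\|_p$. By the definition of the dual norm (equivalently Hölder's inequality), $\nabla f_{iy}(x)^\top \delta \le \|\nabla f_{iy}(x)\|_{p^*}\|\delta\|_p$, with equality attained at some $\delta^\star$ with $\|\delta^\star\|_p=\|\delta\|_p$. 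Since the objective is monotonically increasing in $\|\delta\|_p$, the supremum is attained at $\|\delta\|_p=\varepsilon$ and equals
\[
f_{iy}(x)+\|\nabla f_{iy}(x)\|_{p^*}\,\varepsilon+\tfrac{L}{2}\varepsilon^2.
\]

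Next, the per-index constraint becomes the scalar quadratic inequality $\tfrac{L}{2}\varepsilon^2+\|\nabla f_{iy}(x)\|_{p^*}\varepsilon+f_{iy}(x)\le 0$. Because $x$ is correctly classified, $f_{iy}(x)<0$, so the discriminant $\|\nabla f_{iy}(x)\|_{p^*}^2-2L f_{iy}(x)$ is strictly positive, and the quadratic (with positive leading coefficient) has one negative and one nonnegative root. The largest $\varepsilon\ge 0$ for which the inequality holds is the positive root,
\[
\varepsilon_i \;=\; \frac{-\|\nabla f_{iy}(x)\|_{p^*}+\bigl(\|\nabla f_{iy}(x)\|_{p^*}^2-2L f_{iy}(x)\bigr)^{1/2}}{L}.
\]
Finally, since all constraints in \eqref{eq:defenseRadiusRelaxed1} must hold simultaneously, the feasible $\varepsilon$'s are those with $\varepsilon\le \varepsilon_i$ for every $i\neq y$, so $\underline{\varepsilon}_1^*(x)=\min_{i\neq y}\varepsilon_i$, giving \eqref{eq:icmlLipschitzCertificate}.

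The only real subtlety is the tightness argument for the inner supremum, i.e., that Hölder's inequality can be saturated inside the $\ell_p$-ball of radius $\varepsilon$ while simultaneously maximizing the quadratic term. This is straightforward because the two terms are jointly maximized at the same $\delta^\star$ (a dual-norm maximizer rescaled to $\|\delta^\star\|_p=\varepsilon$), so no separate optimization is needed. A minor edge case is $L=0$, in which case the problem degenerates to the linear zeroth-order certificate \eqref{eq:neuripsLipschitzCertificate}; this can be recovered as the limit of \eqref{eq:icmlLipschitzCertificate} as $L\to 0^+$ by a Taylor expansion of the square root.
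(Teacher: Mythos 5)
Your proof is correct and follows essentially the same route as the paper's: evaluate the inner supremum via the dual-norm characterization (noting the linear and quadratic terms are simultaneously maximized by the same rescaled dual maximizer on the sphere $\|\delta\|_p=\varepsilon$), reduce each constraint to a scalar quadratic inequality in $\varepsilon$, take the positive root, and minimize over $i\neq y$. The discriminant/sign analysis and the $L\to 0^+$ edge case you add are sound refinements of the paper's ``elementary calculations'' step.
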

See \Cref{par:proofOfCertRad} for the proof of this proposition.


In the following proposition, we show that if the curvature of the model is sufficiently small, we can certify a larger radius than Lipschitz-based certificates.

\begin{proposition}\label{prop:compare}
    Suppose $x$ is classified correctly, i.e., $f_{iy}(x)<0 \ \forall i \neq y$. Fix a $p \geq 1$, and define the zeroth-order $\underline{\varepsilon}_0^*(x)$ and first-order $\underline{\varepsilon}_1^*(x)$ certified radii as in \eqref{eq:neuripsLipschitzCertificate} and \eqref{eq:icmlLipschitzCertificate}. If the following condition holds,
    \begin{align}
         L_{\nabla f_{iy}}^{p, p^*}(x) \leq \frac{-2(\| \nabla f_{iy}(x) \|_{p^*} \underline{\varepsilon}_0^*(x) + f_{iy}(x))}{\underline{\varepsilon}_0^*(x)^2}, \quad i \neq y. \notag
    \end{align}
    Then $\underline{\varepsilon}_1^*(x) \geq \underline{\varepsilon}_0^*(x)$. 
\end{proposition}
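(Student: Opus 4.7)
The plan is to reduce the vector inequality $\underline{\varepsilon}_1^*(x) \geq \underline{\varepsilon}_0^*(x)$ to a per-index algebraic inequality, and then verify that this inequality is an exact algebraic rearrangement of the hypothesis. Since both radii are minima over $i \neq y$, it suffices to show the lower bound $\underline{\varepsilon}_0^*(x)$ summand by summand for the corresponding term appearing in \eqref{eq:icmlLipschitzCertificate}.

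I would introduce shorthand: for each fixed $i \neq y$, let $g_i := \|\nabla f_{iy}(x)\|_{p^*}$, $L_i := L_{\nabla f_{iy}}^{p, p^*}(x)$, $a_i := f_{iy}(x) < 0$, and $\varepsilon_0 := \underline{\varepsilon}_0^*(x) \geq 0$. The radicand $g_i^2 - 2L_i a_i$ is non-negative because $a_i < 0$ and $L_i \geq 0$, so the $i$-th term in \eqref{eq:icmlLipschitzCertificate} is a well-defined real number. Assuming $L_i > 0$, the target inequality $(-g_i + \sqrt{g_i^2 - 2 L_i a_i})/L_i \geq \varepsilon_0$ is equivalent, after multiplying by $L_i$ and adding $g_i$, to
\[
\sqrt{g_i^2 - 2 L_i a_i} \geq g_i + L_i \varepsilon_0.
\]
Both sides are non-negative, so squaring is an equivalence. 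Expanding the square, cancelling $g_i^2$, and dividing by $L_i$, the inequality collapses to $L_i \varepsilon_0^2 \leq -2(g_i \varepsilon_0 + a_i)$, which is verbatim the hypothesis of the proposition. The degenerate case $L_i = 0$ is handled by taking the $L_i \to 0^+$ limit in \eqref{eq:icmlLipschitzCertificate}, which yields $-a_i/g_i$; in this regime the hypothesis reads $g_i \varepsilon_0 + a_i \leq 0$, i.e., $\varepsilon_0 \leq -a_i/g_i$, so the per-index bound still holds. Taking the minimum over $i \neq y$ then yields $\underline{\varepsilon}_1^*(x) \geq \underline{\varepsilon}_0^*(x)$.

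The only delicate point is the squaring step, which requires both sides to be non-negative — a condition automatically satisfied here since $g_i, L_i, \varepsilon_0 \geq 0$ — together with mild bookkeeping for the $L_i = 0$ boundary. There is no substantive obstacle: the curvature threshold stated in the proposition is constructed precisely so that, after squaring and simplification, it coincides term for term with the per-index inequality one needs.
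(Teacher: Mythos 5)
Your proof is correct and follows essentially the same route as the paper's: reduce to the per-index inequality, rearrange it into the quadratic condition $L_i\varepsilon_0^2 + 2g_i\varepsilon_0 + 2a_i \le 0$, and observe this is exactly the stated hypothesis. You are in fact slightly more careful than the paper (justifying the squaring step and treating the $L_i = 0$ boundary); the only thing the paper adds that you omit is a final remark, via the gradient-norm bound $\|\nabla f_{iy}(x)\|_{p^*} \le L_{f_{iy}}^p(x)$, that the right-hand side of the hypothesis is positive and hence the condition is non-vacuous.
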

See \Cref{par:radiiComp} for the proof.

\subsection{Attack Certificates for Deep Classifiers}
Considering the same setup as before, we now aim to obtain the smallest perturbation by which a correctly classified data point can provably be misclassified.
This computation can be formulated as the following optimization problem,
\begin{align}\label{eq:attackRadius}
    \begin{split}
        {\varepsilon^{'}}^*(x) = \min &\quad \varepsilon \\
        \text{s.t.} &\quad \min_{i \neq y}\inf_{\| \delta \|_p \leq \varepsilon} f_{yi}(x + \delta) < 0. 
    \end{split}
\end{align}

First, we note that problems \eqref{eq:defenseRadius} and \eqref{eq:attackRadius} are equivalent.
\begin{proposition}\label{prop:equivalenceOfAttackAndDefence}
    Suppose $f$ correctly classifies the data point $x$ as $y$, i.e., $f_{iy}(x)<0$ for $i \neq y$. Then the optimal value of problems \eqref{eq:defenseRadius} and \eqref{eq:attackRadius} are equal, i.e., $\varepsilon^*(x) = {\varepsilon^{'}}^*(x)$.
\end{proposition}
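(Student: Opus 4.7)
The plan is to reduce both optimization problems to thresholding a single monotone scalar function of $\varepsilon$, so that their optimal values become the supremum and infimum of complementary sublevel sets of the same function.

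Step one exploits the antisymmetry $f_{yi}(z) = -f_{iy}(z)$, which gives
\[
\inf_{\|\delta\|_p \leq \varepsilon} f_{yi}(x + \delta) \;=\; -\sup_{\|\delta\|_p \leq \varepsilon} f_{iy}(x + \delta).
\]
Hence the constraint $\min_{i \neq y} \inf_{\|\delta\|_p \leq \varepsilon} f_{yi}(x + \delta) < 0$ in \eqref{eq:attackRadius} is equivalent to $\max_{i \neq y} \sup_{\|\delta\|_p \leq \varepsilon} f_{iy}(x + \delta) > 0$. Defining
\[
g(\varepsilon) := \max_{i \neq y} \sup_{\|\delta\|_p \leq \varepsilon} f_{iy}(x + \delta),
\]
I would rewrite the two radii as
\[
\varepsilon^*(x) = \sup\{\varepsilon \geq 0 : g(\varepsilon) \leq 0\}, \quad {\varepsilon^{'}}^*(x) = \inf\{\varepsilon \geq 0 : g(\varepsilon) > 0\}.
\]

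Step two uses monotonicity: enlarging the ball $\{\|\delta\|_p \leq \varepsilon\}$ can only increase a supremum, so $g$ is nondecreasing. Let $A = \{\varepsilon \geq 0 : g(\varepsilon) \leq 0\}$ and $B = \{\varepsilon \geq 0 : g(\varepsilon) > 0\}$. Monotonicity forces every element of $A$ to lie below every element of $B$: if there were $a \in A$ and $b \in B$ with $a > b$, then $g(a) \geq g(b) > 0$, contradicting $a \in A$. Therefore $\sup A \leq \inf B$.

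Step three closes any gap. Since $g(\varepsilon)$ is either $\leq 0$ or $> 0$ for each $\varepsilon$, the sets $A$ and $B$ partition $[0,\infty)$. If $\sup A < \inf B$, any point $c$ strictly between them would lie in neither set, contradicting $A \cup B = [0,\infty)$. Thus $\sup A = \inf B$, which is exactly $\varepsilon^*(x) = {\varepsilon^{'}}^*(x)$.

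I do not anticipate a serious obstacle. The only care needed is the notational mismatch between the paper's $\max$/$\min$ in the definitions and the $\sup$/$\inf$ used above: continuity of $f$ and compactness of $\{\|\delta\|_p \leq \varepsilon\}$ make $A$ closed, so $\sup A$ is attained and the outer $\max$ is well-defined, while the outer $\min$ in ${\varepsilon^{'}}^*(x)$ should be interpreted as $\inf$ because the strict inequality can leave the infimum unattained. This interpretation does not affect the equality of the two optimal values.
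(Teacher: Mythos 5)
Your proof is correct and rests on the same essential facts as the paper's: the antisymmetry $f_{yi}=-f_{iy}$ makes an attack witness at radius $\varepsilon$ exactly a violation of the defense constraint at radius $\varepsilon$, and the feasible sets are nested in $\varepsilon$. The paper runs this as two separate contradiction arguments with explicit perturbation witnesses, whereas you package it as the supremum and infimum of complementary sublevel sets of a single nondecreasing function $g$; your version is a cleaner and more careful treatment of the no-gap step (and of the attainment/convention issues), but it is not a genuinely different route.
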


Using the curvature-based upper bound, one can tighten the constraints of the problem and achieve a first-order (gradient-informed) attack certificate as follows,
\begin{align}\label{eq:attackRadiusRelaxed1}
        \begin{split}
            \overline{\varepsilon}_1^*(x) = &\min \varepsilon \\
        &\text{s.t.} \min_{i \neq y} \inf_{\| \delta \|_p \leq \varepsilon} \overline{f_{yi}}(x, \delta; L_{\nabla f_{yi}}^{p, {p^*}}(x)) < 0, 
        \end{split}
\end{align}

We analytically acquire the optimal value of this problem in the following proposition.
\begin{proposition}[Curvature-based attack certificate]\label{prop:AttackCert}
    Suppose $x$ is classified correctly, i.e., $f_{iy}(x)<0 \ \forall i \neq y$. Let $\mathcal{I} = \{ i | i \neq y, 2 L_{\nabla f_{yi}}^{p, {p^*}}(x) f_{yi}(x) \leq \| \nabla f_{yi}(x) \|_{p^*}^2   \}$. 
    Assuming that $\mathcal{I}$ is non-empty, the optimization problem \eqref{eq:attackRadiusRelaxed1} has the closed-form solution $\overline{\varepsilon}_1^*(x)$ given by 
    \begin{align}\label{eq:attackCert}
        \min_{i \in \mathcal{I}} \! \frac{\| \nabla f_{yi}(x) \|_{p^*} \! - \! (\| \nabla f_{yi}(x) \|_{p^*}^2 \! - \! 2 L_{\nabla f_{yi}}^{p, {p^*}}(x) f_{yi}(x))^{\frac{1}{2}}}{L_{\nabla f_{yi}}^{p, {p^*}}(x)}.
    \end{align}
    Given $i^* \in \mathcal{I}$ minimizing \eqref{eq:attackCert}, the perturbation realizing the attack certificate is obtained through solving $\sup_{\|\delta\|_p \leq \varepsilon} \nabla f_{i^*y}(x)^\top \delta$.
\end{proposition}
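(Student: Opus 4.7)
The plan is to reduce \eqref{eq:attackRadiusRelaxed1} to a family of one-dimensional convex quadratic problems, one per class $i \neq y$, and solve each in closed form. Fixing $i \neq y$ and abbreviating $L_i = L_{\nabla f_{yi}}^{p,p^*}(x)$, I observe that $\overline{f_{yi}}(x,\delta;L_i)$ depends on $\delta$ only through the scalars $\nabla f_{yi}(x)^\top \delta$ and $\|\delta\|_p$. Writing $r = \|\delta\|_p$ and using the tightness of H\"older's inequality, $\min_{\|\delta\|_p = r} \nabla f_{yi}(x)^\top \delta = -r\,\|\nabla f_{yi}(x)\|_{p^*}$. Hence the inner infimum in \eqref{eq:attackRadiusRelaxed1} equals $\inf_{r \in [0,\varepsilon]} h_i(r)$ with
\[
h_i(r) = f_{yi}(x) - r\,\|\nabla f_{yi}(x)\|_{p^*} + \frac{L_i}{2}\,r^2.
\]

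I would then analyze $h_i$ as a convex quadratic in $r \geq 0$: the leading coefficient $L_i/2$ is nonnegative and $h_i(0) = f_{yi}(x) = -f_{iy}(x) > 0$ by correct classification. So $\inf_{r\geq 0} h_i(r) < 0$ is possible iff $h_i$ has two real (and, since Vieta's formulas give positive sum and product, necessarily positive) roots, which by the discriminant is equivalent to $\|\nabla f_{yi}(x)\|_{p^*}^2 \geq 2 L_i f_{yi}(x)$ --- exactly the membership condition $i \in \mathcal{I}$. For $i \notin \mathcal{I}$ no finite $\varepsilon$ makes the $i$-th constraint feasible, so those indices can be discarded. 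For $i \in \mathcal{I}$ the two roots are
\[
r_\pm(i) = \frac{\|\nabla f_{yi}(x)\|_{p^*} \pm \sqrt{\|\nabla f_{yi}(x)\|_{p^*}^2 - 2L_i\,f_{yi}(x)}}{L_i},
\]
and $h_i < 0$ precisely on the open interval $(r_-(i), r_+(i))$. Thus the set of $\varepsilon$ realizing the $i$-th constraint is $(r_-(i),\infty)$, whose infimum is $r_-(i)$. Taking the minimum over $i \in \mathcal{I}$ yields the closed-form expression \eqref{eq:attackCert} for $\overline{\varepsilon}_1^*(x)$.

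For the attacking perturbation, I would note that at any minimizer $\delta^\star$ with $\|\delta^\star\|_p = \overline{\varepsilon}_1^*(x)$ the linear term of $\overline{f_{yi^\star}}$ must attain its minimum, i.e., $\nabla f_{yi^\star}(x)^\top \delta^\star = -\overline{\varepsilon}_1^*(x)\,\|\nabla f_{yi^\star}(x)\|_{p^*}$. Since $\nabla f_{i^\star y}(x) = -\nabla f_{yi^\star}(x)$, this is equivalent to $\delta^\star$ solving $\sup_{\|\delta\|_p \leq \overline{\varepsilon}_1^*(x)} \nabla f_{i^\star y}(x)^\top \delta$, matching the stated description. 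The only mildly delicate step is the handling of the strict inequality ``$<0$'' in \eqref{eq:attackRadiusRelaxed1}: at $\varepsilon = r_-(i)$ the reduced infimum equals $0$ rather than a negative value, so \eqref{eq:attackCert} is really the infimal (rather than attained minimal) value of the program. This bookkeeping is the main conceptual care point; the rest is a direct closed-form solution of a convex quadratic on an interval.
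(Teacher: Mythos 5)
Your proposal is correct and follows essentially the same route as the paper's proof: both reduce the inner infimum via the tightness of H\"older's inequality to the scalar convex quadratic $f_{yi}(x) - t\,\|\nabla f_{yi}(x)\|_{p^*} + \tfrac{L_i}{2}t^2$ and identify $\overline{\varepsilon}_1^*(x)$ as the minimum over $i \in \mathcal{I}$ of its smaller root, the paper merely organizing the computation as a case analysis over $\lambda = t/\varepsilon \in [0,1]$ rather than over the root locations. Your closing remark on the strict inequality (the optimal value being an infimum rather than an attained minimum at $\varepsilon = r_-(i)$) is a bookkeeping point the paper's own proof glosses over.
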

\begin{figure}[t]
    \centering
    \includegraphics[width=0.8\columnwidth]{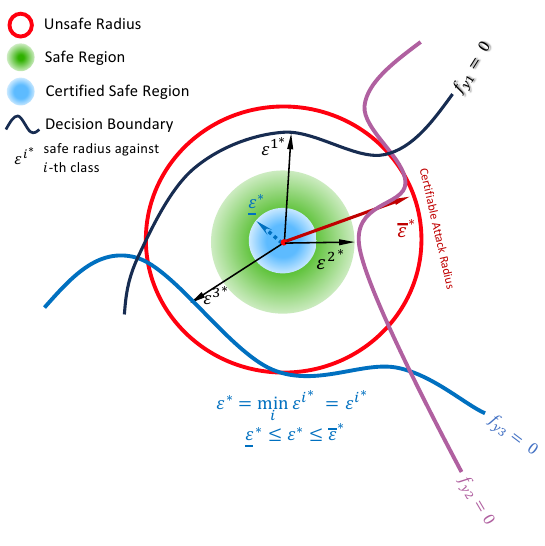}
    \caption{Certified ($\underline{\varepsilon}^* $) and attack ($\overline{\varepsilon}^*$) radii estimates. The green tangent circle denotes the certified radius $\varepsilon^*$.}
    \label{fig:certifiedRadiusAttack}
\end{figure}
We note that while problem  \eqref{eq:attackRadius} is always feasible (for a non-trivial classifier), problem \eqref{eq:attackRadiusRelaxed1} can be infeasible due to the tightening of the constraints, which is equivalent to the set $\mathcal{I}$ being empty. \Cref{fig:certifiedRadiusAttack} illustrates an example scenario for $\underline{\varepsilon}^*$ and $\overline{\varepsilon}^*$.


The derivation of $\underline{\varepsilon}^*(x)$ and $\overline{\varepsilon}^*(x)$ is significant in two ways. First, we established an analytical solution to the curvature-based certified radius in \Cref{prop:secondOrderCertifiedRadius}. Although curvature-based certificates have been studied in \cite{singla2020second}, their method involves an iterative algorithm to solve an optimization problem numerically, whereas our method yields closed-form solutions. 
Second, we introduced curvature-based attack certificates, a novel method for narrowing the \textit{certification gap} of classifiers.
The \textit{certification gap} is the (empirical) probability quantifying correctly classified points that lack the desired level of certified defense radii and lack attack certificates at a given perturbation budget $\epsilon$, i.e., $\mathbb{P}_{(x, y) \sim D}\{\underline{\varepsilon}(x) \leq \epsilon \leq \overline{\varepsilon}(x)\}$. Refer to \Cref{fig:attack_example} for an illustration.

\section{Efficient Estimation of Curvature Bounds}


    

Having established the importance of curvature in providing robustness and attack certificates, in this section, we propose our method to derive upper bounds on the Lipschitz constant of the Jacobian (the curvature constant) of general sequential models.
We then curate our algorithm for residual neural networks and explore various Lipschitz estimation techniques to calculate the curvature constant.

\subsection{Curvature Bounds for Composed Functions}
Let $h = f \circ g$ be the composition of two continuously differentiable functions $g: \mathbb{R}^{n_1} \to \mathbb{R}^{n_2}$  and $f: \mathbb{R}^{n_2} \to \mathbb{R}^{n_3}$ with Lipschitz Jacobians. The Lipschitz constant of the Jacobian $Dh \in \mathbb{R}^{n_3 \times n_1}$ of $h$ is an upper bound on the \emph{second derivative} of $h$, assuming it exists,  which is a third-order tensor \cite{srinivas2022efficient} and difficult to characterize.
Our goal is to compute the Lipschitz constant of $Dh$ directly without resorting to any tensor calculus. 

Using the chain rule, the Jacobian of $h$ can be written as 
$$Dh(x) = Df(g(x)) Dg(x).$$

The following theorem establishes a relation between the Lipschitz constant of $Dh$ and the Lipschitz constants of $f, g, Df$, and $Dg$. 
\begin{theorem} [Compositional curvature estimation] \label{prop:recursiveJacobianLipschitz}
        Given functions $f, g$, and $h$ as described above, the following inequality holds,
        \begin{align}\label{eq:curvatureRecursion}
            L_{Dh}^{p, p^*} \leq L^{p,p^*}_{Dg} L^{p^*}_f  + L^{p, p^*}_{Df} L_g^{p} L_g^{p^*},
        \end{align}
        where $L^{p, q}_s$ denotes the Lipschitz constant of the function $s(\cdot)$.
\end{theorem}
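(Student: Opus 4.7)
My plan is to expand $Dh(x_1) - Dh(x_2)$ via the chain rule and a telescoping ``add--subtract'' decomposition, and then bound the two resulting pieces by sub-multiplicativity of the operator norm together with the Lipschitz and curvature bounds of $f$ and $g$ that are available by hypothesis.

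Concretely, the chain rule gives $Dh(x) = Df(g(x))\, Dg(x)$, and inserting $\pm\, Df(g(x_1))\, Dg(x_2)$ yields
\[
Dh(x_1) - Dh(x_2) = Df(g(x_1))\bigl[Dg(x_1) - Dg(x_2)\bigr] + \bigl[Df(g(x_1)) - Df(g(x_2))\bigr] Dg(x_2).
\]
I would take the appropriate operator norm of both sides and apply the triangle inequality, producing two summands that I factor via sub-multiplicativity. For the first summand I would use $\|Df(g(x_1))\| \leq L_f^{p^*}$ (since, by \Cref{lemma:lipschitzUpperBoundOnNormJacobian}, the $p^*$-Lipschitz constant of $f$ dominates the norm of its Jacobian) and $\|Dg(x_1) - Dg(x_2)\| \leq L_{Dg}^{p,p^*}\, \|x_1 - x_2\|_p$ by the definition of the curvature of $g$; their product yields $L_{Dg}^{p,p^*} L_f^{p^*} \|x_1 - x_2\|_p$. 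For the second summand I would combine the curvature of $Df$ with Lipschitz continuity of $g$ in the $p$-norm to get $\|Df(g(x_1)) - Df(g(x_2))\| \leq L_{Df}^{p,p^*}\, L_g^p\, \|x_1 - x_2\|_p$, and then bound the remaining factor by $\|Dg(x_2)\| \leq L_g^{p^*}$. Summing the two bounds, dividing by $\|x_1 - x_2\|_p$, and taking the supremum over $x_1 \neq x_2$ produces the claim.

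The main subtlety is the careful alignment of matrix norms in the sub-multiplicative step: the appearance of the mixed product $L_g^p \cdot L_g^{p^*}$ (rather than $(L_g^p)^2$) in the second summand reflects that one factor of $L_g$ is extracted from an inner $p$-norm bound on $\|g(x_1) - g(x_2)\|_p$ that appears inside the curvature-of-$Df$ estimate, whereas the other factor comes from bounding $\|Dg(x_2)\|$ by the $p^*$-Lipschitz constant of $g$. The proof must specify the operator-norm convention on the Jacobians so that these two separate appearances of $Dg$ (one inside a $p$-norm difference, one as a standalone operator) are simultaneously compatible with sub-multiplicativity on the product $\bigl[Df(g(x_1)) - Df(g(x_2))\bigr] Dg(x_2)$.
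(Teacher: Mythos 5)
Your proposal is correct and follows essentially the same route as the paper's proof: the same add--subtract decomposition of $Df(g(x_1))Dg(x_1) - Df(g(x_2))Dg(x_2)$, the same use of \Cref{lemma:lipschitzUpperBoundOnNormJacobian} to bound $\|Df(\cdot)\|$ and $\|Dg(\cdot)\|$ by the corresponding Lipschitz constants, and the same norm alignment producing the mixed product $L_g^{p}L_g^{p^*}$. The only cosmetic difference is that the paper carries general norms $q, q'$ through the argument and specializes to $q = p^*$, $q' = p$ at the end.
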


\Cref{prop:recursiveJacobianLipschitz} provides a basis to recursively calculate a Lipschitz constant for the Jacobian of the composition of multiple functions. In the following, we adapt this result to anchored Lipschitz constants. 
\begin{theorem}[Anchored compositional curvature estimation]\label{prop:anchoredIterativeJacobian}
    Consider functions $f, g,$  and $h$ as in \Cref{prop:recursiveJacobianLipschitz}. 
    The following inequality holds for the anchored Lipschitz constant of the Jacobian of $h$ at $x$
    \begin{equation*}
        L^{p, p^*}_{Dh}(x) \leq L^{p^*}_{f} L^{p,{p^*}}_{Dg}(x) + \|Dg(x)\|_{p^*} L^{p, {p^*}}_{Df}(g(x)) L^{p}_g(x),
    \end{equation*}
    where $L^{p, q}_s(x)$ denotes the anchored Lipschitz constant of the function $s(\cdot)$ at  $x$.
\end{theorem}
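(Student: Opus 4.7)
The plan is to imitate the proof of the non-anchored compositional estimate in Theorem~\ref{prop:recursiveJacobianLipschitz}, replacing global Lipschitz constants by their anchored versions wherever the base point $x$ (or its image $g(x)$) can be carried through the argument. The first step is to apply the chain rule, $Dh(\cdot) = Df(g(\cdot))\,Dg(\cdot)$, and to use the telescoping identity
\[
Dh(y) - Dh(x) = Df(g(y))\bigl(Dg(y) - Dg(x)\bigr) + \bigl(Df(g(y)) - Df(g(x))\bigr)\,Dg(x).
\]
I deliberately choose this particular add-subtract split so that $Dg(x)$ (rather than $Dg(y)$) appears as a standalone factor in the second summand, which is what produces the $\|Dg(x)\|_{p^*}$ quantity stated in the theorem.

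Next, I measure the matrix-valued difference in the operator norm compatible with the $(p,p^*)$ convention used throughout the curvature analysis, apply the triangle inequality, and bound each summand by submultiplicativity. In the first summand I estimate $\|Df(g(y))\| \leq L_f^{p^*}$ using the global Lipschitz constant of $f$ (since $y$ ranges freely over the domain, localization in $y$ is of no help here), and $\|Dg(y) - Dg(x)\| \leq L_{Dg}^{p,p^*}(x)\|y - x\|_p$ by the very definition of the anchored Lipschitz constant of $Dg$ at $x$. In the second summand I apply $\|Df(g(y)) - Df(g(x))\| \leq L_{Df}^{p,p^*}(g(x))\,\|g(y) - g(x)\|_p$, invoking the anchored Lipschitz constant of $Df$ at $g(x)$, then use $\|g(y) - g(x)\|_p \leq L_g^p(x)\|y-x\|_p$ to convert the inner distance back to the input metric, and leave $\|Dg(x)\|_{p^*}$ untouched because it is already a base-point quantity.

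Summing the two estimates gives $\|Dh(y) - Dh(x)\| \leq \bigl(L_f^{p^*}\,L_{Dg}^{p,p^*}(x) + \|Dg(x)\|_{p^*}\,L_{Df}^{p,p^*}(g(x))\,L_g^p(x)\bigr)\|y-x\|_p$; dividing through by $\|y-x\|_p$ and taking the supremum over $y \neq x$ recovers exactly the claimed upper bound on $L_{Dh}^{p,p^*}(x)$. The main technical obstacle is the norm bookkeeping: because $Df$ and $Dg$ are matrix-valued, two operator norms coexist, and the intermediate norm chosen when applying submultiplicativity to the matrix products must agree with the index convention attached to each Lipschitz constant. Once the indices are aligned in the same way as in the proof of Theorem~\ref{prop:recursiveJacobianLipschitz}, the anchored constants substitute in mechanically, because the telescoping split already exposes $x$ and $g(x)$ as the natural anchor points on which the definitions of $L_{Dg}^{p,p^*}(x)$, $L_{Df}^{p,p^*}(g(x))$, and $L_g^p(x)$ are built.
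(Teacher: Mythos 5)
Your proposal matches the paper's proof essentially verbatim: the same telescoping split of $Dh(x+\delta)-Dh(x)$ chosen so that $Dg(x)$ (rather than $Dg(x+\delta)$) survives as a standalone factor, followed by the triangle inequality, submultiplicativity, the anchored bounds on $Dg$ at $x$ and on $Df$ at $g(x)$, and the global bound $\|Df(\cdot)\|_{p^*}\leq L_f^{p^*}$ via \Cref{lemma:lipschitzUpperBoundOnNormJacobian}. No gaps; the argument is correct as written.
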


The structure of \Cref{prop:recursiveJacobianLipschitz} (and similarly \Cref{prop:anchoredIterativeJacobian}) is of particular interest for sequential neural networks that are the composition of individual layers. In the following section, we will instantiate our framework for such models.
\begin{remark}
    We note that in Theorems \ref{prop:recursiveJacobianLipschitz} and \ref{prop:anchoredIterativeJacobian}, the dual norm $p^*$ is chosen to tailor the bounds specifically for \Cref{prop:secondOrderCertifiedRadius} and \Cref{prop:AttackCert}. 
    In general, the same statements hold if we replace $p^*$ with a general $q \geq 1$.
    See \Cref{par:proofCompCurv} for more details.
\end{remark}

\subsection{Curvature Bounds for Sequential Neural Networks}

Consider a sequential residual neural network
\begin{align}\label{eq:sequential model}
    x^{k+1} = h^k(x^k) = H^k x^k + G^k \Phi(W^k x^k),
\end{align}
where $k=0, 1, \cdots K-1$ and $W^k \in \mathbb{R}^{n^\prime_{k} \times n_{k}}, G^k \in \mathbb{R}^{n_{k + 1} \times n^\prime_{k}}, $ and $H^k \in \mathbb{R}^{n_{k + 1} \times n_{k}}$ are general matrices. 
For $x \in \mathbb{R}^n$, $\Phi(x) = [\phi(x_1), \cdots, \phi(x_n)]^\top$,  where $\phi$ is a differentiable monotone activation function slope-restricted in  $[\alpha, \beta]$ ($0 \leq \alpha \leq \beta < \infty$)  with its derivative slope-restricted in $[\alpha', \beta']$  ($-\infty < \alpha' \leq \beta' < \infty$). By setting $H^k=0$ and $G^k=I$, we obtain the standard feedforward architecture.

Leveraging \Cref{prop:recursiveJacobianLipschitz}, we propose a recursive algorithm to compute an upper bound on the Jacobian of the end-to-end map $x^0 \mapsto x^{K}$.
We establish this algorithm in \Cref{prop:networkHessianCalculationAlgorithm}.

\begin{corollary}
\label{prop:networkHessianCalculationAlgorithm}
    Let $\overline{L}^{p, p^*}_{D_k}$, $k=0,\cdots, K - 1$ be defined recursively as
    \begin{align} \label{eq: curvature bound for sequential networks}
        \overline{L}^{p, p^*}_{D_{k+1}} = \overline{L}^{p, p^*}_{Dh^k} \overline{L}^p_{k} \overline{L}^{p^*}_{k} + \overline{L}^{p^*}_{h^k} \overline{L}^{p, p^*}_{D_k}, 
    \end{align}
    with $\overline{L}^{p, p^*}_{D_0} = 0$, $\overline{L}^p_0 = \overline{L}^{p^*}_0=1$, where $\overline{L}^p_k,\overline{L}^{p^*}_k$ are Lipschitz constants for the map $x^0 \mapsto x^k$, and $\overline{L}^{p, p^*}_{Dh^k}$ is a Lipschitz constant for the Jacobian of $h^k$. Then $\overline{L}^{p, p^*}_{D_k}$ is a Lipschitz constant for the Jacobian of the map $x^0 \mapsto x^k$.
\end{corollary}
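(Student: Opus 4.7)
The plan is to prove this by induction on $k$, where the end-to-end map is viewed as the composition $F_k := h^{k-1} \circ h^{k-2} \circ \cdots \circ h^0$, with the convention $F_0 = \mathrm{id}$. The inductive hypothesis is exactly that $\overline{L}^{p, p^*}_{D_k}$ is a Lipschitz constant for $DF_k$ in the $(p, p^*)$ norms.

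For the base case $k=0$, the map $F_0$ is the identity, whose Jacobian is the constant identity matrix. Hence $DF_0$ has Lipschitz constant $0$, matching $\overline{L}^{p, p^*}_{D_0} = 0$, and the identity trivially has Lipschitz constant $1$ in either norm, matching $\overline{L}^p_0 = \overline{L}^{p^*}_0 = 1$.

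For the inductive step, write $F_{k+1} = h^k \circ F_k$ and apply \Cref{prop:recursiveJacobianLipschitz} with $f := h^k$ (the outer map) and $g := F_k$ (the inner map). The theorem yields
\begin{equation*}
    L^{p, p^*}_{DF_{k+1}} \leq L^{p, p^*}_{DF_k}\, L^{p^*}_{h^k} + L^{p, p^*}_{Dh^k}\, L^p_{F_k}\, L^{p^*}_{F_k}.
\end{equation*}
By the inductive hypothesis, $L^{p, p^*}_{DF_k} \leq \overline{L}^{p, p^*}_{D_k}$; by assumption $L^p_{F_k} \leq \overline{L}^p_k$ and $L^{p^*}_{F_k} \leq \overline{L}^{p^*}_k$ (these layer-wise Lipschitz constants for the map $x^0 \mapsto x^k$ are treated as given inputs to the algorithm, and can be computed in parallel, e.g.\ by the product rule $\overline{L}^p_{k+1} \leq \overline{L}^p_{h^k}\, \overline{L}^p_k$ or via LipLT); and by definition $\overline{L}^{p, p^*}_{Dh^k}$ and $\overline{L}^{p^*}_{h^k}$ upper bound the corresponding quantities for a single layer. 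Substituting these bounds matches the recursion \eqref{eq: curvature bound for sequential networks}, closing the induction.

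The only real subtlety is keeping the roles in \Cref{prop:recursiveJacobianLipschitz} straight: the coefficients in front of $L^{p, p^*}_{Dg}$ and $L^{p, p^*}_{Df}$ are asymmetric (the outer-function term picks up two Lipschitz factors of the inner map in complementary norms, while the inner-function term picks up only one Lipschitz factor of the outer map in the dual norm), so one must identify $h^k$ as the outer map $f$ and $F_k$ as the inner map $g$ rather than the reverse. Once this assignment is fixed, the remainder is a direct substitution, and no additional curvature identities are needed beyond \Cref{prop:recursiveJacobianLipschitz}.
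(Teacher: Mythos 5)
Your proof is correct and is exactly the argument the paper intends: the corollary is presented as a direct consequence of \Cref{prop:recursiveJacobianLipschitz}, obtained by induction on the depth with $F_{k+1}=h^k\circ F_k$, identifying $h^k$ as the outer map $f$ and the subnetwork $F_k$ as the inner map $g$. Your base case and the substitution of the given upper bounds into the recursion match the paper's construction, so there is nothing to add.
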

%
 Given upper bounds on the Lipschitz constants as $\overline{L}^p_k$, $\overline{L}^{p^*}_k$, 
$\overline{L}^{p^*}_{h^k}$, and $\overline{L}^{p, p^*}_{Dh^k}$, \Cref{prop:networkHessianCalculationAlgorithm} presents an algorithm to calculate an upper bound on the curvature constant of residual neural networks in a layer-by-layer fashion.

Next, we will compute the individual constants appearing in \eqref{eq: curvature bound for sequential networks}.

\subsubsection{Computation of $\overline{L}^p_{h^k}$} 
$L^p_{h^k}$ is the Lipschitz constant of the $k$-th layer $h^k$. Starting from \eqref{eq:sequential model}, an analytical upper  bound on this constant is
\begin{equation}\label{eq:naiveLipschitzOfResidual}
    \overline{L}_{h^k}^{p, \text{naive}} = \|H^k\|_p + \beta \|G^k\|_p \| W^k\|_p.
\end{equation}
This bound is relatively crude as it does not exploit the monotonicity of the activations, i.e., \eqref{eq:naiveLipschitzOfResidual} is agnostic to the value of $\alpha $.
As proposed in \cite{fazlyab2023certified}, this bound can be improved by applying a loop transformation on the activation layer $\Phi$. Specifically,  we can rewrite $h^k$ as
\begin{align} \label{loop transformed layer}
    h^k(x^k) = \hat{H}^k x^k + G^k \Psi(W^k x^k),
\end{align}
where $\hat{H}^k = H^k\!+\!\frac{\alpha+\beta}{2}G^kW^k$ and $\psi(z)=\phi(z)-(\alpha+\beta)z/2$ is the loop transformed activation layer. As a result of this transformation, $\Psi$ is now slope-restricted in $\frac{\beta-\alpha}{2}[-1,1]$, implying that $\psi$ is $(\frac{\beta-\alpha}{2})$-Lipschitz. An upper bound on the Lipschitz constant of $h^k$, reformulated as in \eqref{loop transformed layer}, is then
\begin{align}\label{eq:LipLT single layer}
    \overline{L}_{h^k}^{p, \text{LT}} = \|H^k\!+\!\frac{\alpha\!+\!\beta}{2}G^kW^k\|_p +\frac{\beta-\alpha}{2}\|G^k\|_p \| W^k\|_p.
\end{align}
As shown in \cite{fazlyab2023certified}, this bound, now informed by the monotonicity constant $\alpha$, is provably better than \eqref{eq:naiveLipschitzOfResidual}. This can be proved by applying the triangle inequality on the first term.  

\subsubsection{Computation of $\overline{L}^p_k$.}
$L^p_k$ is the Lipschitz constant of the map $x^0 \mapsto x^k$ defined by the composed function $(h^{k-1} \circ \cdots \circ h^0)(x^0)$. A naive bound on $L^p_k$ is the product of the Lipschitz constant of individual layers, i.e., $\overline{L}_k^{p, \text{naive}} = \prod_{i=0}^{k - 1} L^p_{h^i}$, where we can upper bound each $L^p_{h^i}$ from \eqref{eq:LipLT single layer}. However, this bound can grow quickly as the depth increases. To mitigate the adverse effect of depth, we exploit the idea of LipLT. Specifically, we can \emph{unroll} \eqref{eq:sequential model} after applying loop transformation to all activation layers, resulting in
\begin{align} 
    \begin{split}
        x^{k + 1} \!&=\! \hat{H}^{k} \cdots \hat{H}^0 x^0 \!+\! \sum_{j = 0}^{k} \hat{H}^{k} \cdots \hat{H}^{j + 1}G^j\Psi(W^j x^j). \notag
    \end{split}
\end{align}
This representation enables us to obtain all the constants $\overline{L}_1^{p, \text{LT}},\cdots,\overline{L}_K^{p, \text{LT}}$ recursively as follows,
\begin{align}\label{eq:lipLtRecursive}
    \begin{split}
        \overline{L}_{k+1}^{p, \text{LT}} \!&=\! \|\hat{H}^{k} \!\cdots\! \hat{H}^0\|_p \\ &\!+\! 
    \tfrac{\beta\!-\!\alpha}{2}\sum_{j=0}^{k} \|\hat{H}^{k} \cdots \hat{H}^{j + 1}G^j\|_p \|W^j\|_p\overline{L}_j^{p,\text{LT}}.
    \end{split}
\end{align}
As shown in \cite{fazlyab2023certified}, this bound provably improves the naive bound obtained by the product of Lipschitz constants of individual layers, i.e., $ \overline{L}_{k}^{p, \text{LT}} \leq \prod_{i=0}^{k - 1} \overline{L}^{p, \text{naive}}_{h^i}$. 

\subsubsection{Computation of $\overline{L}^{p, p^*}_{Dh^k}$.}\label{subsec:lipSdpJacLip}

Consider the $k$-th residual block $h^k$ in \eqref{eq:sequential model}.  The Jacobian of this block is given as
    \begin{equation}\label{eq:JacobianFormula}
        Dh^k(x^k) = H^k + G^k \: \mathrm{diag}(\Phi'(W^kx^k)) W^k.
    \end{equation}
    The following proposition provides an upper bound on the Lipschitz constant of this differential operator.
    \begin{proposition}\label{prop:naiveJacLip}
        The Jacobian $Dh^k$ defined in \eqref{eq:JacobianFormula} is Lipschitz continuous with $\overline{L}^{p, p^*}_{Dh^k}$ being an upper bound on the Lipschitz constant, where
        \begin{align*}
            \overline{L}^{p, p^*}_{Dh^k} = L_{\phi'} \|G^k\|_{p^*} \|W^k\|_{p^*} \|W^k\|_{p \to \infty},
        \end{align*}
        where $L_{\phi'} = \max\{|\alpha'|, |\beta'|\}$.
    \end{proposition}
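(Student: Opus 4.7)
The plan is to compute $Dh^k(x_1)-Dh^k(x_2)$ in closed form from \eqref{eq:JacobianFormula}, observe that the linear term $H^k$ cancels, and then bound the remaining matrix product by submultiplicativity of induced operator norms together with a componentwise Lipschitz estimate on $\phi'$.

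Concretely, subtracting \eqref{eq:JacobianFormula} at the two inputs gives
\[
Dh^k(x_1)-Dh^k(x_2) = G^k\,\Delta D\,W^k,
\]
where $\Delta D=\mathrm{diag}(\eta)$ with $\eta_i=\phi'((W^k x_1)_i)-\phi'((W^k x_2)_i)$. Applying submultiplicativity of the induced $\ell_{p^*}$ operator norm then yields
\[
\|G^k\,\Delta D\,W^k\|_{p^*}\;\leq\;\|G^k\|_{p^*}\,\|\Delta D\|_{p^*}\,\|W^k\|_{p^*}.
\]
Because $\Delta D$ is diagonal, its induced $\ell_{p^*}$ norm equals the largest absolute diagonal entry, $\|\Delta D\|_{p^*}=\|\eta\|_\infty$. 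Finally, slope-restriction of $\phi'$ on $[\alpha',\beta']$ gives $|\phi'(a)-\phi'(b)|\leq L_{\phi'}|a-b|$ with $L_{\phi'}=\max(|\alpha'|,|\beta'|)$, so componentwise $|\eta_i|\leq L_{\phi'}|(W^k(x_1-x_2))_i|$ and consequently
\[
\|\eta\|_\infty\;\leq\;L_{\phi'}\,\|W^k(x_1-x_2)\|_\infty\;\leq\;L_{\phi'}\,\|W^k\|_{p\to\infty}\,\|x_1-x_2\|_p.
\]
Chaining these three inequalities yields exactly the claimed value of $\overline{L}^{p,p^*}_{Dh^k}$.

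The argument itself is short and mechanical; the main design choice — and thus the closest thing to an obstacle — is the selection of the intermediate vector norm used to measure $\Delta D$. Choosing the induced $\ell_{p^*}$ norm (equivalently the $\ell_\infty$ norm of its diagonal) lets the input perturbation be absorbed cleanly via $\|W^k\|_{p\to\infty}\|x_1-x_2\|_p$ in the last step, while submultiplicativity contributes the two clean $\|W^k\|_{p^*}$ and $\|G^k\|_{p^*}$ factors. Alternative factorizations, such as bounding $\|\eta\|_{p^*}$ directly, would instead produce a $\|W^k\|_{p\to p^*}$ term and break the structure of the claimed bound.
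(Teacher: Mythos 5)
Your proof is correct and follows essentially the same route as the paper's: cancel the $H^k$ term, factor the difference as $G^k\,\Delta D\,W^k$, use submultiplicativity of the induced $\ell_{p^*}$ norm, identify the norm of the diagonal factor with the $\ell_\infty$ norm of its entries, and absorb the input perturbation via $\|W^k\|_{p\to\infty}$. The only cosmetic difference is that the paper works with a general output norm $q$ and sets $q=p^*$ at the end.
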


        It is worth mentioning that the upper bound in \Cref{prop:naiveJacLip} is tractable and can be calculated efficiently.
        In particular, for $p = 2$, the matrix norms $\|G^k\|_{p^*}$ and $ \|W^k\|_{p^*}$ can be calculated via the power iteration for fully connected and convolutional layers.
        Furthermore, $\|W^k\|_{p \to \infty}$ is simply the maximum row $\ell_{p}$ norm, which is straightforward for fully connected layers and also convolutional layers with respect to the repetitive structure of their Toeplitz matrices. 
        See the proof in \Cref{par:proofJacLipLayer} for more details.
    
For the choice $p=p^*=2$, we propose an alternative approach to acquire better Lipschitz estimates for $Dh^k$. To this end, we propose to rewrite $Dh^k$ as a standard network block.
\begin{lemma}[Vectorized Jacobian]\label{prop:jacobianConversionToLinear}
    The Jacobian matrix in \eqref{eq:JacobianFormula} can be rewritten as a standard neural network layer
    \[
    dh^k(x^k) := \mathrm{vec}(Dh^k(x^k))=b^k + A^k\Phi'(W^k x^k),
    \]
    where $dh^k(x^k) \in \mathbb{R}^{\hat{n}_k}$ with $\hat{n}_k = n_{k + 1} \times n_{k}$. For all $i \in [n_{k + 1}]$, $ j \in [n_k]$, and $l \in [n^\prime_{k}]$  let $m = (j - 1)\times n_{k + 1} + i$. Then $A \in \mathbb{R}^{\hat{n}_k \times n^\prime_k}$ and $b^k \in \mathbb{R}^{\hat{n}_k}$ are given by  
    $
    b^k_m = H^k_{ij},\ A^k_{ml} = G^k_{il}W^k_{lj}.
    $
\end{lemma}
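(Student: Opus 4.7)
The plan is to derive the claim by directly unpacking the Jacobian entry-by-entry and then applying the column-major vectorization. Let $\sigma := \Phi'(W^k x^k) \in \mathbb{R}^{n'_k}$, so that $\mathrm{diag}(\Phi'(W^k x^k)) = \mathrm{diag}(\sigma)$. Writing \eqref{eq:JacobianFormula} column-by-column gives, for each $j \in [n_k]$,
\[
(Dh^k(x^k))_{:,j} \;=\; H^k_{:,j} \;+\; G^k\,\mathrm{diag}(\sigma)\,W^k_{:,j} \;=\; H^k_{:,j} \;+\; \sum_{l=1}^{n'_k} \sigma_l\, W^k_{lj}\, G^k_{:,l}.
\]
Reading off the $i$-th coordinate, the $(i,j)$-entry of the Jacobian equals
\[
(Dh^k(x^k))_{ij} \;=\; H^k_{ij} \;+\; \sum_{l=1}^{n'_k} G^k_{il}\, W^k_{lj}\, \sigma_l.
\]

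Next I would apply the column-major vectorization. By definition, $\mathrm{vec}(Dh^k(x^k))$ is the vector in $\mathbb{R}^{\hat n_k}$ whose entry at position $m = (j-1)\, n_{k+1} + i$ equals $(Dh^k(x^k))_{ij}$. Substituting the expression above,
\[
\bigl(\mathrm{vec}(Dh^k(x^k))\bigr)_m \;=\; H^k_{ij} \;+\; \sum_{l=1}^{n'_k} \bigl(G^k_{il}\, W^k_{lj}\bigr)\, \sigma_l.
\]
Identifying $b^k_m := H^k_{ij}$ and $A^k_{ml} := G^k_{il}\, W^k_{lj}$ exactly matches the definitions in the statement, and the summation on the right is precisely the $m$-th coordinate of $A^k\, \Phi'(W^k x^k)$. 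Hence $\mathrm{vec}(Dh^k(x^k)) = b^k + A^k\, \Phi'(W^k x^k)$, as claimed.

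No substantive obstacle is anticipated; the proof is a bookkeeping exercise that reduces to expanding a matrix product and rearranging a double sum. The only point that requires care is fixing a consistent vectorization convention (column-major), since the index-to-position map $m = (j-1)\, n_{k+1} + i$ specified in the statement implicitly commits to this choice. An alternative convention (row-major) would simply permute the rows of $A^k$ and the entries of $b^k$ in a compatible way, so the identity continues to hold with the same structural form, namely a linear map applied to $\Phi'(W^k x^k)$ plus a bias.
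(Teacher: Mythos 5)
Your proposal is correct and follows essentially the same route as the paper: expand $(Dh^k(x^k))_{ij} = H^k_{ij} + \sum_{l} G^k_{il} W^k_{lj}\,\Phi'(W^k x^k)_l$ entrywise and read off $b^k$ and $A^k$ under the column-major index map $m=(j-1)n_{k+1}+i$. The paper's proof additionally records the factorization $A^k=\hat{G}^k\tilde{W}^k$ for later use, but that is not needed to establish the lemma itself.
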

The following lemma establishes the relation between the Lipschitz constant of the Jacobian matrix ($L^p_{Dh^k}$) and its vectorized representation ($L^p_{dh^k}$) when $p=2$.

\begin{lemma}\label{prop:lipschitzOfLinearJacobian}
    Let $p=2$, and suppose $L^p_{dh^k}$ is the Lipschitz constant of the vectorized Jacobian function $x^k \mapsto dh^k(x^k)$ defined in \Cref{prop:jacobianConversionToLinear}. 
    Then $L^p_{dh^k}$ is a valid  Lipschitz constant for the Jacobian function $x^k \mapsto Dh^k(x^k)$.
\end{lemma}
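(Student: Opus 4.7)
The plan is to exploit two standard facts: first, that vectorization is an isometry between matrices equipped with the Frobenius norm and vectors equipped with the Euclidean norm, i.e., $\|M\|_F = \|\mathrm{vec}(M)\|_2$ for every matrix $M$; second, that the spectral norm is dominated by the Frobenius norm, $\|M\|_2 \leq \|M\|_F$. Combined with the linearity of $\mathrm{vec}(\cdot)$, these facts let us pass from an $\ell_2$ Lipschitz bound on $dh^k = \mathrm{vec}(Dh^k)$ to a spectral-norm Lipschitz bound on $Dh^k$ itself.

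More concretely, I would fix two arbitrary points $x^k, y^k$ and write
\begin{align*}
\|Dh^k(x^k) - Dh^k(y^k)\|_2 &\leq \|Dh^k(x^k) - Dh^k(y^k)\|_F \\
&= \|\mathrm{vec}(Dh^k(x^k) - Dh^k(y^k))\|_2 \\
&= \|dh^k(x^k) - dh^k(y^k)\|_2 \\
&\leq L^p_{dh^k}\, \|x^k - y^k\|_2,
\end{align*}
where the first line is the spectral–Frobenius inequality, the second is the isometry property of vectorization, the third uses linearity of $\mathrm{vec}$ together with the identity $dh^k = \mathrm{vec}(Dh^k)$ from \Cref{prop:jacobianConversionToLinear}, and the last step is the definition of the Lipschitz constant $L^p_{dh^k}$ in the $\ell_2$ norm. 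Since $x^k, y^k$ were arbitrary, this shows that $L^p_{dh^k}$ is a valid Lipschitz constant for the matrix-valued map $x^k \mapsto Dh^k(x^k)$ in the spectral norm, which is exactly $\|\cdot\|_{2 \to 2}$ and hence the relevant norm for $p = 2$.

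There is no real obstacle in this argument; the only subtlety worth flagging is why the choice $p = 2$ is essential. The chain above relies on the spectral–Frobenius inequality, which is specific to the $\ell_2$ operator norm and the matching Euclidean vector norm; an analogous bound for general $p$ would require a different vector-matrix norm equivalence with a dimension-dependent constant, so the clean, constant-free statement only holds in the Euclidean setting.
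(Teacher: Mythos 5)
Your proof is correct and follows essentially the same route as the paper's: both arguments chain the spectral--Frobenius inequality $\|M\|_2 \leq \|M\|_F$ with the isometry $\|M\|_F = \|\mathrm{vec}(M)\|_2$ and then invoke the Lipschitz property of $dh^k$. Your closing remark about why $p=2$ is essential is a useful observation that the paper leaves implicit.
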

We can improve the Lipschitz bound provided in \Cref{prop:naiveJacLip} using the previous lemmas. 

\begin{theorem}\label{prop:vectorizedLipschitzIsBetter} 
    For $p = 2$,  
     $\overline{L}^p_{dh^k} = L_{\phi'} \|A^k\|_2 \|W^k\|_2$ is a Lipcshitz constant for the Jaocbian matrix $Dh^k$. Furthermore, $\overline{L}^p_{dh^k} \leq \overline{L}^p_{Dh^k}$.
\end{theorem}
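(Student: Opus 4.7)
The plan is to split the theorem into its two assertions and handle them sequentially, with the second being the more substantive part.

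For the Lipschitz claim, I would simply view the vectorized Jacobian
$dh^k(x^k) = b^k + A^k\,\Phi'(W^k x^k)$
from \Cref{prop:jacobianConversionToLinear} as a standard neural network layer: affine map $x^k \mapsto W^k x^k$, then the componentwise map $\Phi'$, then the affine map $z \mapsto b^k + A^k z$. Since $\phi'$ is slope-restricted in $[\alpha',\beta']$, the map $\Phi'$ is $L_{\phi'}$-Lipschitz in $\ell_2$, so composing Lipschitz constants yields $\|dh^k(x)-dh^k(y)\|_2 \leq L_{\phi'}\,\|A^k\|_2\,\|W^k\|_2\,\|x-y\|_2$. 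Then \Cref{prop:lipschitzOfLinearJacobian} transfers this to a valid Lipschitz constant for the matrix-valued Jacobian $Dh^k$.

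For the comparison $\overline{L}^p_{dh^k} \leq \overline{L}^p_{Dh^k}$, after canceling the common factor $L_{\phi'}\|W^k\|_2$ it suffices to prove $\|A^k\|_2 \leq \|G^k\|_2\,\|W^k\|_{2\to\infty}$. The key identification, which I expect to be the main hurdle, is recognizing $A^k$ as the ``sandwich'' linear map on diagonal matrices: for $v \in \mathbb{R}^{n'_k}$, using $A^k_{ml} = G^k_{il}W^k_{lj}$ with $m=(j-1)n_{k+1}+i$, one verifies the coordinate identity $(A^k v)_m = (G^k\,\mathrm{diag}(v)\,W^k)_{ij}$, so that $A^k v = \mathrm{vec}(G^k\,\mathrm{diag}(v)\,W^k)$ and therefore $\|A^k v\|_2 = \|G^k\,\mathrm{diag}(v)\,W^k\|_F$.

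From there, the inequality follows from two standard estimates. First, the mixed Frobenius/spectral submultiplicativity gives
\[
\|G^k\,\mathrm{diag}(v)\,W^k\|_F \leq \|G^k\|_2\,\|\mathrm{diag}(v)\,W^k\|_F.
\]
Second, a direct row-wise expansion yields
\[
\|\mathrm{diag}(v)\,W^k\|_F^2 = \sum_l v_l^2\,\|W^k_{l,:}\|_2^2 \leq \bigl(\max_l \|W^k_{l,:}\|_2^2\bigr)\|v\|_2^2 = \|W^k\|_{2\to\infty}^2\,\|v\|_2^2,
\]
where the last equality uses the standard characterization of the $(2,\infty)$ operator norm as the largest $\ell_2$ row norm. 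Taking the supremum over $\|v\|_2 \leq 1$ gives $\|A^k\|_2 \leq \|G^k\|_2\,\|W^k\|_{2\to\infty}$, and multiplying through by $L_{\phi'}\|W^k\|_2$ completes the comparison. The crux is the vectorization identity; the rest is bookkeeping with well-known matrix norm inequalities.
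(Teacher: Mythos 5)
Your proof is correct and follows essentially the same route as the paper's: the crux in both cases is the bound $\|A^k\|_2 \leq \|G^k\|_2\,\|W^k\|_{2\to\infty}$, which the paper obtains from the explicit factorization $A^k = \hat{G}^k\tilde{W}^k$ (with $\|\tilde{W}^k\|_2 = \|W^k\|_{2\to\infty}$ computed via the diagonal Gram matrix $\tilde{W}^{k\top}\tilde{W}^k$), while you obtain it equivalently from the action $A^k v = \mathrm{vec}\bigl(G^k\,\mathrm{diag}(v)\,W^k\bigr)$ together with $\|G^k M\|_F \leq \|G^k\|_2\|M\|_F$ and the row-wise Frobenius expansion. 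Your treatment of the first assertion (compositional Lipschitz bound for $dh^k$ plus \Cref{prop:lipschitzOfLinearJacobian}) is also what the theorem statement presupposes, so the argument is complete.
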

Leveraging the vectorized representation $dh^k$ of the Jacobian $Dh^k$, we can utilize more advanced techniques for Lipschitz estimation such as LipSDP to further reduce the conservatism. Specifically, for non-residual building blocks, i.e., when $H^k = 0$ and $G^k = I$, we can extract a feasible solution (optimal when $\alpha' = -\beta'$) to the LipSDP formulation for $dh^k(x)$. 
\begin{theorem}\label{prop:analyticalSdp}
    Let $h^k(x) = \Phi(W^kx)$. Define $\overline{L}^{2, \text{SDP}}_{dh^k}= L_{\phi'} \| T W^k \|_2$, where $T$ is a diagonal matrix with $T_{ii} = \| W^k_{i, :} \|_2$. Then $\overline{L}^{2, \text{SDP}}_{dh^k}$ is a valid Lipschitz constant for $dh^k$ in $\ell_2$ norm.
\end{theorem}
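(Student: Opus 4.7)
The approach is to exploit the vectorized representation from \Cref{prop:jacobianConversionToLinear} together with a special orthogonality property of the matrix $A^k$ that appears when $H^k = 0$ and $G^k = I$. With these choices, the formulas of the Vectorized Jacobian lemma specialize to $b^k = 0$ and $A^k_{ml} = \delta_{il} W^k_{lj}$ for $m = (j-1)n_{k+1} + i$. The crucial structural observation is that the $l$-th column of $A^k$ is supported only on rows with $i = l$, so columns with different indices have pairwise-disjoint supports. This immediately yields the diagonal identity
\[
A^{k\top} A^k = \mathrm{diag}(\|W^k_{1,:}\|_2^2, \ldots, \|W^k_{n'_k,:}\|_2^2) = T^2,
\]
which is the engine of the whole argument.

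From this identity, for any vector $\Delta \in \mathbb{R}^{n'_k}$ we get the isometry-type relation $\|A^k \Delta\|_2 = \|T \Delta\|_2$. Specializing to $\Delta = \Phi'(W^k x) - \Phi'(W^k y)$ and using the coordinatewise Lipschitz property of $\phi'$, namely $|\Phi'(W^k x)_i - \Phi'(W^k y)_i| \leq L_{\phi'}\, |(W^k(x-y))_i|$, I would bound
\[
\|T\Delta\|_2^2 = \sum_i T_{ii}^2 \Delta_i^2 \leq L_{\phi'}^2 \sum_i T_{ii}^2 (W^k(x-y))_i^2 = L_{\phi'}^2 \|T W^k(x-y)\|_2^2 \leq L_{\phi'}^2 \|T W^k\|_2^2 \|x-y\|_2^2.
\]
Taking square roots and recalling that $\|dh^k(x) - dh^k(y)\|_2 = \|A^k \Delta\|_2 = \|T\Delta\|_2$ gives the claimed Lipschitz bound $L_{\phi'}\|TW^k\|_2$.

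The only conceptual hurdle is spotting the orthogonality pattern of the columns of $A^k$ so that $A^{k\top}A^k$ collapses to a diagonal; once the identity $A^{k\top}A^k = T^2$ is in hand, the remainder is routine elementwise Lipschitz bookkeeping. An alternative, perhaps more illuminating route would be to check that the same $T$ (together with $\rho = L_{\phi'}^2\|TW^k\|_2^2$) is a feasible---and, when $\alpha' = -\beta'$, optimal---multiplier of the LipSDP-Neuron program for the single-layer map $x \mapsto A^k \Phi'(W^k x)$, which would justify the superscript ``SDP'' in the theorem's notation; however, since \Cref{prop:lipschitzOfLinearJacobian} already reduces the claim to a Lipschitz bound on the vectorized map, the direct calculation above is cleaner and avoids writing out the full LMI.
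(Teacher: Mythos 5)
Your proof is correct, but it takes a genuinely different route from the paper. The paper proves this theorem by writing out the LipSDP linear matrix inequality for the map $x \mapsto A^k\Phi'(W^kx)$ and exhibiting the explicit feasible pair $T^* = A^{k\top}A^k$, $\rho^* = \beta'^2\|W^{k\top}T^*W^k\|_2$ (first in the symmetric case $\alpha'=-\beta'$, where it also argues optimality of this multiplier, and then extending to general slope restrictions by enlarging the interval to $[-\max\{|\alpha'|,|\beta'|\},\max\{|\alpha'|,|\beta'|\}]$). You instead bypass the LMI entirely: from the same key structural identity $A^{k\top}A^k = T^2$ (disjoint column supports of $A^k$), you get the exact relation $\|A^k\Delta\|_2 = \|T\Delta\|_2$ and finish with the coordinatewise bound $|\Delta_i|\leq L_{\phi'}|(W^k(x-y))_i|$, which commutes with the diagonal $T$. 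Your argument is shorter, needs no case split on the sign of $\alpha'$, and makes the mechanism transparent; what it does not deliver is the paper's additional observation that this bound is the \emph{optimal} value of the LipSDP program when $\alpha'=-\beta'$, which is the content behind the superscript ``SDP'' in the notation (you correctly flag that this could be recovered by checking feasibility of $T$ in the LMI). Both proofs rest on the same engine, namely the diagonality of $A^{k\top}A^k$, so the difference is in how that identity is converted into a Lipschitz certificate.
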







\subsubsection{Summary of Algorithm}
It now remains to combine all the components developed thus far to obtain upper bounds on the curvature of the whole network. This is summarized in \Cref{alg:iterativeCurvatureEstimation}.
First, we use LipLT to calculate the Lipschitz constants of the individual layers ($\overline{L}^{p^*}_{h^k}$) and the subnetworks ($\overline{L}^p_k$ and $\overline{L}^{p^*}_k$). Then, using \Cref{prop:naiveJacLip}, we provide an upper bound on $L^{p, p^*}_{Dh^k}$. Finally, we calculate $\overline{L}^{p, p^*}_{D^{k + 1}}$ using \eqref{eq: curvature bound for sequential networks}.

In the algorithm, we can easily swap the use of \Cref{prop:naiveJacLip} with any Lipschitz constant acquired based on the theoretical ground of \Cref{prop:lipschitzOfLinearJacobian}, such as that of \Cref{prop:vectorizedLipschitzIsBetter} or \Cref{prop:analyticalSdp} (if the network is non-residual).

\begin{algorithm}[tb]
   \caption{Compositional Curvature Estimation of Neural Networks}
   \label{alg:iterativeCurvatureEstimation}
\begin{algorithmic}
   \STATE {\bfseries Input:} $K$-layer neural network in the form of \eqref{eq:sequential model}.
   \STATE {\bfseries Initialize} $\overline{L}^p_0 = \overline{L}^{p^*}_0 = 1, \overline{L}^{p, p^*}_{D_0} = 0$.
   \FOR{$k=0$ {\bfseries to} $K - 1$}
   \STATE Calculate $\overline{L}^{p^*}_{h^k}$ using \eqref{eq:LipLT single layer}.
   \STATE Calculate a bound on $\overline{L}^{p, p^*}_{Dh^k}$ using \Cref{prop:naiveJacLip}.
   \STATE Update $\overline{L}^{p, p^*}_{D_{k+1}} = \overline{L}^{p, p^*}_{Dh^k} \overline{L}^p_{k} \overline{L}^{p^*}_{k} + \overline{L}^{p^*}_{h^k} \overline{L}^{p, p^*}_{D_k}$.
   \STATE Calculate $\overline{L}^p_{k + 1}$ and $\overline{L}^{p^*}_{k + 1}$ using \eqref{eq:lipLtRecursive}.
   \ENDFOR
   \STATE {\bfseries Return} $\overline{L}^{p, p^*}_{D_K}$, the Lipschitz constant of the Jacobian of $x^0 \mapsto x^K$. 
\end{algorithmic}
\end{algorithm}

\subsubsection{Comparison with existing approaches} 
Unlike the previous work by Singla et al. \cite{singla2020second}, which is limited to scalar-valued and non-residual architectures, our framework accommodates vector-valued general sequential models. Additionally, although Singla et al. \cite{singla2020second} could theoretically handle convolutional neural networks by expressing such layers as equivalent fully connected layers using their Toeplitz matrices \cite{CHEN2020}, this approach would be computationally prohibitive. In contrast, our method readily applies to convolutional layers.

Moreover, our method does \emph{not} require twice differentiability. This is particularly relevant for functions with Lipschitz continuous first derivatives but undefined second derivatives. For instance, consider the well-known Exponential Linear Unit (ELU): 
\[
f(z) = \begin{cases}
    z \qquad &z \geq 0 \\
    \alpha(e^z - 1) &z \leq 0
\end{cases}
\]
The ELU has a Lipschitz continuous first derivative, but its second derivative is not defined at \(z = 0\). Therefore, Hessian-based analysis would fail for this function, whereas our Jacobian Lipschitz analysis is applicable to networks using this activation function.

In the following section, we will utilize \Cref{alg:iterativeCurvatureEstimation} to bound the Lipschitz constant of the Jacobian and exploit it during the training phase to control the curvature of the neural network.

\subsection{Curvature-Controlled Networks}
As established in \Cref{sec:curvatureBasedAnalysis}, models with low curvature constants can elicit more robust behavior against norm-bounded perturbations. 
Driven by this observation, we can design a curvature-based regularizer that would promote robustness during training. One approach is to reward large certified radii in the objective similar to \cite{fazlyab2023certified,xu2023exploring}, giving rise to the training loss function
\begin{align*}
    \mathcal{L}(x, y; f) = \mathcal{L}_{\text{CE}}(f(x), u_y) +\lambda \mathrm{1}_{\{C(x)=y\}} g(\underline{\varepsilon}_1^*(x)),
\end{align*}
where $\mathcal{L}_\text{CE}$ is the cross-entropy loss, $u_y$ is the one-hot vector corresponding to $y$, $\lambda>0$ is the regularization constant, and $g \colon \mathbb{R}_{+} \to \mathbb{R}$ is a convex decreasing function (e.g., $g(z)=\exp(-z)$). The role of the indicator function $\mathrm{1}_{\{C(x)=y\}}$ is to restrict the regularizer to correctly classified points only. This regularizer is differentiable due to the closed-form expression for $\varepsilon_1^*(x)$ given in \eqref{eq:icmlLipschitzCertificate}. Nonetheless, computing it for each data point in the training dataset can be computationally costly for large-scale instances.
A more efficient approach is to regularize the bound on the global curvature of the network during training, 
\begin{align*}
    \mathcal{L}(x, y; f) = \mathcal{L}_{\text{CE}}(f(x), u_y) +\lambda \overline{L}^{p, p^*}_{Df},
\end{align*}
To further improve the efficiency, we propose to use 1-Lipschitz layers to build the architecture. Specifically, when $p=2$, if $h^k$ in \eqref{eq:sequential model} is modified to be a 1-Lipschitz function
($\overline{L}^p_{h^k}=1$), the concatenation of all layers will be 1-Lipschitz ($\overline{L}^p_{k}=1$), and thus \eqref{eq: curvature bound for sequential networks} yields the curvature bound $\overline{L}^{p}_{Df} = \sum_{k=0}^{L-1} \overline{L}^{p}_{Dh^k}$, which can be readily computed.

\section{Experiments}

\begin{figure}[t!]
            \centering
            \includegraphics[width=0.8\columnwidth]{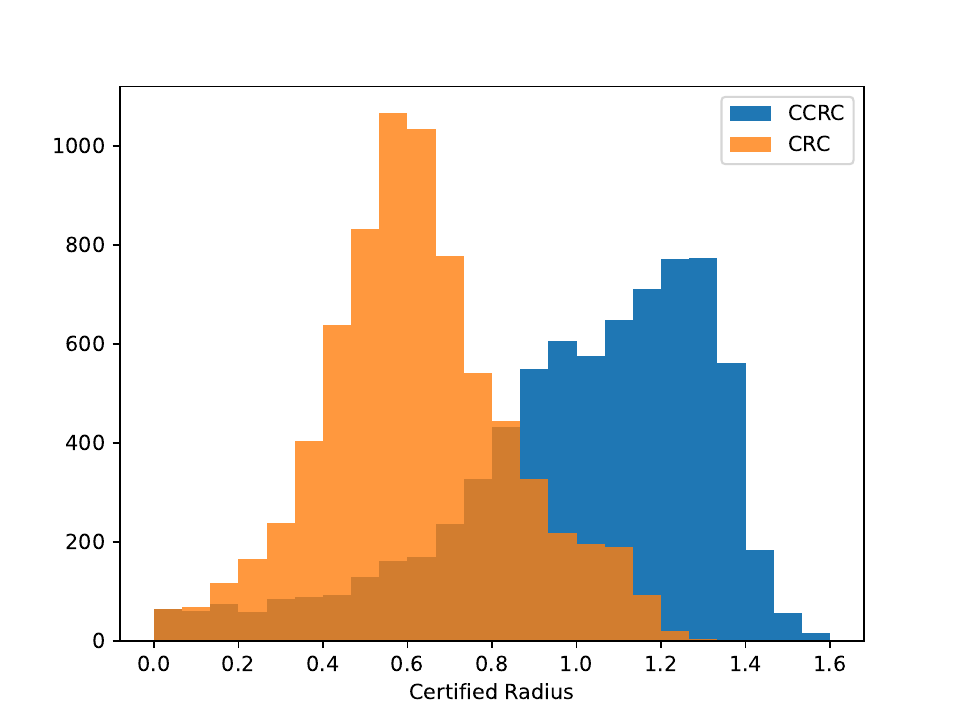}
            \caption{Certified radius comparison on a 6-layer neural network.}
            \label{fig:certified_radius}
\end{figure}

    
    
In this section, we evaluate the performance of our proposed methods via a series of experiments. We contrast our methods against the state-of-the-art Lipschitz constant estimation, curvature estimation, and neural network robustness certification algorithms. In our experiments we set $p = 2$ for all norms and Lipschitz calculations.
We defer the discussion of hyperparameters to \Cref{appendix:hyperparameters}.
Our code is available at \href{https://github.com/o4lc/Compositional-Curvature-Bounds-for-DNNs}{https://github.com/o4lc/Compositional-Curvature-Bounds-for-DNNs}.

We first showcase the application and superiority of our Jacobian Lipschitz constant estimation on MNIST \cite{lecun1998gradient}. 
Next, to further motivate the use of anchored Lipschitz constant estimation, we train several networks with different depths to portray its effectiveness on both Lipschitz constant estimation and Jacobian Lipschitz constant estimation. 
Finally, we compare our robustness certification method with the state-of-the-art classification on the CIFAR \cite{krizhevsky2009learning} dataset and provide attack certificates on the same networks.
    \paragraph{Comparison with other Curvature-based Methods}\label{para:curvatureComparison}
    In this experiment, we compare our proposed compositional curvature calculation method with the previous works.
    Consequently, we train a 6-layer fully connected network on MNIST with curvature regularization and compute the certified radii of the test data points for this network using two curvature calculation algorithms. 
    We denote \cite{singla2020second} as \textit{Curvature-based Robustness Certificate (CRC)}, and our method as \textit{Compositional Curvature-based Robustness Certificate (CCRC)}. 
    Next, to focus the experiment on comparing the curvature bounds, we use the method of \cite{singla2020second} to obtain the certified radius for each point.
    
    \Cref{fig:certified_radius} compares the certified radii of these methods and confirms the superior performance of the compositional curvature calculation algorithm.

\begin{table}[t!]
  \centering
  \caption{Comparison of certified accuracies obtained from state-of-the-art methods SLL \cite{araujo2023unified} and CRM \cite{fazlyab2023certified} on CIFAR-10.}
  \resizebox{0.49\textwidth}{!}
    {\footnotesize
    \begin{tabular}{lcccccc}
        \toprule
        \multicolumn{1}{c}{\multirow{2}[2]{*}{\textbf{Model}}} & 
        \multicolumn{1}{c}{\multirow{2}[2]{*}{\textbf{Methods}}} & 
        \multicolumn{1}{c}{\multirow{2}[2]{*}{\textbf{Accuracy}}}& 
        \multicolumn{3}{c}{\textbf{Certified Accuracy ($\varepsilon$)}} & 
        \multicolumn{1}{c}{\multirow{2}[2]{*}{\textbf {Parameters}}}\\
       \cmidrule{4-6}
          & & & $\frac{36}{255}$ & $\frac{72}{255}$ & $\frac{108}{255}$  & \\
        \midrule
        \multicolumn{1}{c}{\multirow{3}[2]{*}{\textbf{6C2F}}}
          & Standard & \textbf{79.95} & 0 & 0 & 0 &  0.7M\\
          &CRM & 58.57 &	36.25 & 18.36 & 7.37 &  0.7M\\
          &CCRC (Ours) & 61.15 & \textbf{49.53} & \textbf{33.36} & \textbf{16.95} & 0.7M\\
        \midrule
        \multicolumn{1}{c}{\multirow{2}[2]{*}{\textbf{Lip-3C1F}}}
          &SLL & \textbf{57.2} & 45.0 & 35.0 & 26.5 & 1M \\
          &SLL + CCRC (Ours) & 53.2 & \textbf{46.6} & \textbf{39.3} & \textbf{31.6} & 1M \\
        \midrule
        \multicolumn{1}{c}{\multirow{3}[2]{*}{\textbf{6F}}}
          &Standard & 61.89 & 0 & 0 & 0 &  4M \\
          &CRM & 60.63 &	42.73 & 24.75 & 12.6 & 4M\\
          &CCRC (Ours) & \textbf{62.1} & \textbf{52.09} & \textbf{40.8} &\textbf{ 29.17} & 4M \\ 
        \bottomrule
    \end{tabular}%
    }
  \label{table:main_results}%
\end{table}%

\paragraph{Anchored Lipschitz/Curvature Estimation}
    Next, we study the impact of localizing the computations via the concept of anchored Lipschitz constant introduced in this paper. 
To achieve this, we train fully connected neural networks of varying depths and calculate upper bounds on the Lipschitz and curvature constants of the network, both globally and in an anchored manner. \Cref{fig:AnchoredExperiment} illustrates the results on the MNIST dataset. For the anchored bounds, we average the values over the test dataset. The results demonstrate that using the anchored counterparts significantly improves the bounds.
    

\begin{figure}[t]
    \centering
     \includegraphics[width=1\columnwidth]{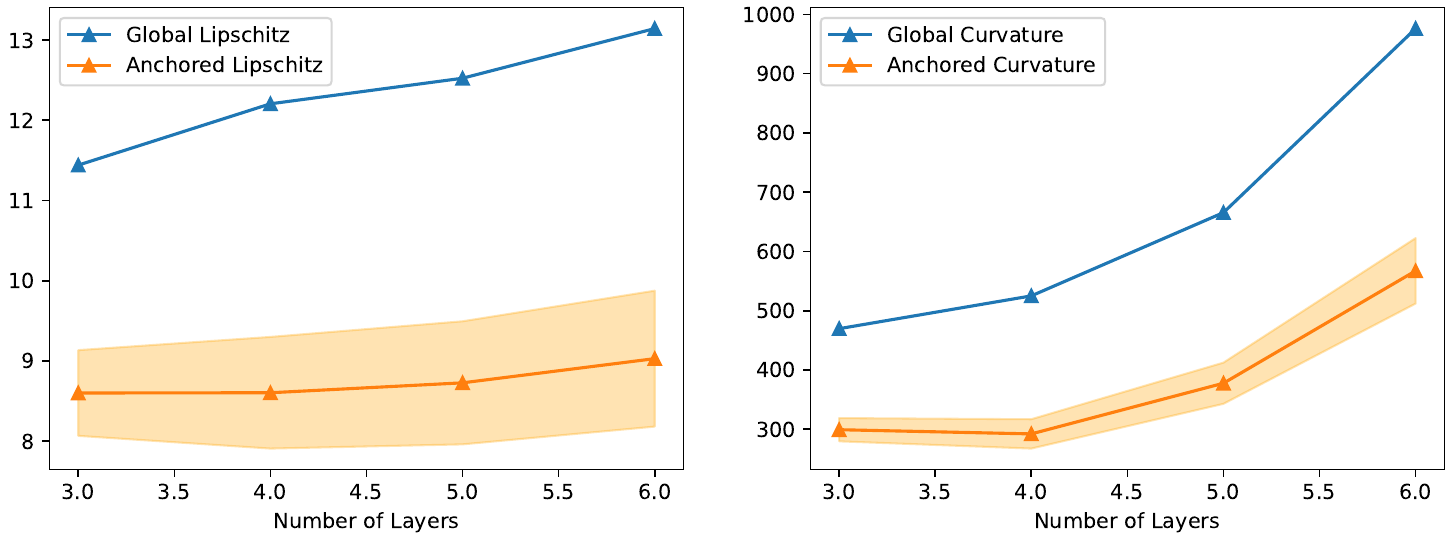}
    \caption{Comparison of global Lipschitz and Curvature estimation against their anchored counterparts. The shaded areas denote the standard deviation over the whole dataset.}
    \label{fig:AnchoredExperiment}
\end{figure}

\paragraph{Attack Certification on CIFAR-10}
\begin{figure}[t!]
    \centering
    \includegraphics[width=.8\columnwidth]{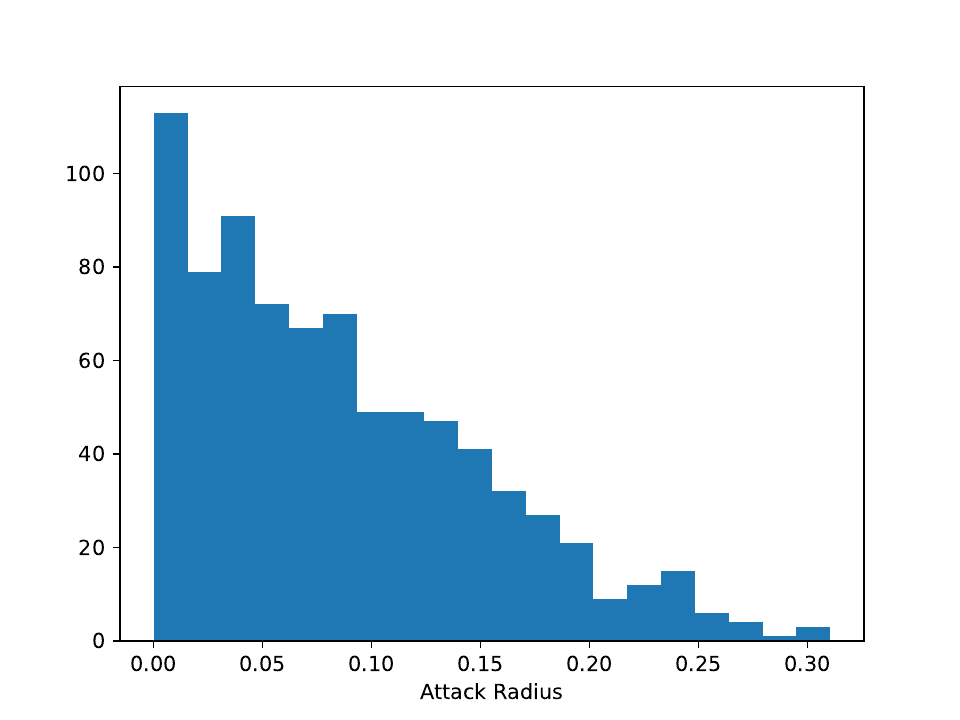}
    \caption{
    Histogram of the certified attack radii for a 6-layer neural network trained via curvature regularization on CIFAR-10.
    }
    \label{fig:newCertifiedAttackRadius}
\end{figure}

In this experiment, we provide radii for provable attacks on a 6F model. \Cref{fig:newCertifiedAttackRadius} shows the results. This model has an accuracy of $56.19\%$. Furthermore, with the perturbation budget  $\frac{36}{255}$ the model has certified and PGD accuracy of $47.16\% $ and $48.46\%$, respectively.
By analyzing the attack certificates we find that our method is able to provide an attack certificate for a total of $808$ samples, of which $645$ require a perturbation budget of at most $\frac{36}{255}$. 
Using this information, the robust accuracy of the model with this perturbation budget is at most $56.19 - \frac{645}{10000}\times 100 = 49.74\%$. 
This is illustrated in \Cref{fig:attack_example}, where $A_c$ is the clean accuracy,  $A^*_v $ is the verified accuracy, and $ \underline{A}_v$ and $\overline{A}_v$, are  lower and upper bounds on the verified accuracy, respectively.

This has two main implications.
First, having attack certificates for any data eliminates the need to perform an attack on that data as the existence of an attack was verified by our proposition. 
Second, the attack certificates further narrow down the uncertainty of the model accuracy. 
As the certified accuracy is a lower bound on the actual certified robustness of the model, we conclude that the actual certified accuracy of this model is in the range $[47.16, 49.74]$, \textit{regardless} of the certification method. 

\begin{figure}[t]
    \centering
    \includegraphics[width=.85\columnwidth]{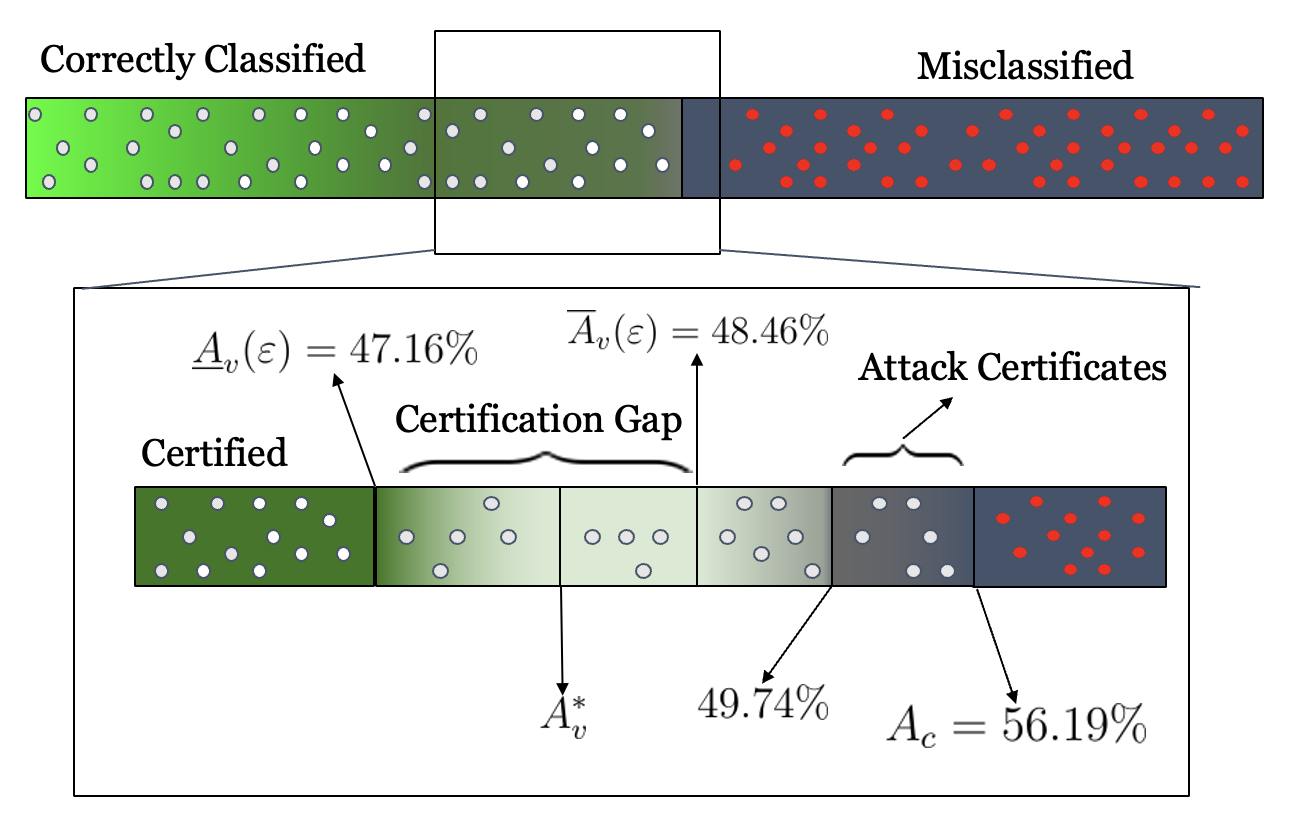}
    \caption{Illustration of the certificates provided by our method. $A_c$ and $A_v$ are the clean and verified accuracies, respectively.
    The certification radius is $\varepsilon = \frac{36}{255}$.}
    \label{fig:attack_example}
\end{figure}

\paragraph{Robustness Certification on CIFAR-10}
The final experiment aims to compare the certified accuracy of models with different architectures on CIFAR-10 with the state-of-the-art.
We train two 6C2F and 6F non-residual and a 1-Lipschitz neural network with \textit{Lip-3C1F} architecture on the CIFAR-10 dataset with the following loss function
\begin{equation}\label{eqn:regularizationOfLipschitz}
    \mathcal{L}(x, y; f) = \mathcal{L}^{\tau, \nu}_{\text{CE}}(f(x), y) + \lambda \overline{L}^p_{Df},
\end{equation}
where $\mathcal{L}^{\tau, \nu}_{\text{CE}}$ is a modified variant of the  cross entropy loss function \cite{prach2022almost} and $\overline{L}^p_{Df}$ is the curvature bound acquired through \Cref{alg:iterativeCurvatureEstimation}. 
Refer to \Cref{appendix:hyperparameters} for more on the loss function details.
\Cref{table:main_results} shows the comparison between these models.
We find that incorporating the additional regularization term leads to higher certified accuracies, smaller certification gaps, and often, higher clean accuracies.

\section{Conclusion}
In this work, we proposed a novel method to calculate provable upper bounds on the curvature constant of smooth deep neural networks, i.e., the Lipschitz constant of their first derivative.  Our method leverages the compositional structure of the model to compute the curvature bound in a scalable and modular fashion. The generality of our curvature estimation algorithm can enable its use for compositional functions beyond neural networks. Furthermore, we provided analytical robustness certificates for deep classifiers based on the curvature of the model. In the future, we aim to further tighten the estimated gap of the curvature bound, enabling the algorithm to produce tighter bounds on the curvature of even deeper neural networks.

\section*{Impact Statement}
This paper presents work whose goal is to advance the field of Machine Learning. 
We do not foresee any societal implications arising solely from our work.



\bibliography{bib}
\bibliographystyle{icml2024}

\newpage
\appendix
\onecolumn
\section{Theorems and Proofs}
\paragraph{Proof of \Cref{lemma:lipschitzUpperBoundOnNormJacobian}}
\begin{proof}
    For any direction $d \in \mathbb{R}^n$, let $g_d(t) = f(x + td)$. Evidently, we have $g_d'(t) = \frac{d}{dt}g_d(t) = Df(x+td)d$, where $Df(x) \in \mathbb{R}^{m \times n}$. Moreover, we have
    \begin{align*}
        \|Df(x)d\|_p=\| g_d'(0) \|_p &= \| \lim_{t \to 0} \dfrac{g_d(t) - g_d(0)}{t - 0} \|_p
        = \lim_{t \to 0} \dfrac{\|f(x + td) - f(x)\|_p}{t}
        \leq L_f^{p}(x) \|d\|_p,
    \end{align*}
    where the third equality follows from continuity of $\|\cdot\|_p$. Consequently, 
    \[
    \|Df(x)\|_p = \sup_{\|d\|_p \leq 1} \|Df(x)d\|_p = \sup_{\|d\|_p \leq 1} \|g_d'(0)\|_p \leq L_f^p(x).
    \]
\end{proof}
\begin{corollary}\label{lemma:lipschitzAndGradientRelation}
    Consider a differentiable function $f: \mathbb{R}^n \to \mathbb{R}$ and let $L_f^p(x)$ be a corresponding anchored Lipschitz constant in some $p$-norm. We have 
    \[
    \| \nabla f(x)\|_{p^*} \leq L_f^p(x).
    \]
\end{corollary}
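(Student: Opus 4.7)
The plan is to reduce the corollary directly to the preceding \Cref{lemma:lipschitzUpperBoundOnNormJacobian} by specializing to the scalar-valued case and identifying the operator norm of the Jacobian with the dual norm of the gradient.

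First, I would observe that when $f \colon \mathbb{R}^n \to \mathbb{R}$, the Jacobian $Df(x)$ is a $1 \times n$ row vector, and in fact $Df(x) = \nabla f(x)^\top$. Consequently, for any direction $d \in \mathbb{R}^n$, the quantity $\|Df(x) d\|_p$ appearing in the definition of the operator norm reduces to the absolute value of a scalar, namely $|\nabla f(x)^\top d|$.

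Next, I would compute the induced operator norm explicitly:
\[
\|Df(x)\|_p = \sup_{\|d\|_p \leq 1} \|Df(x)d\|_p = \sup_{\|d\|_p \leq 1} |\nabla f(x)^\top d| = \|\nabla f(x)\|_{p^*},
\]
where the last equality is exactly the variational characterization of the dual norm recalled in the Preliminaries (i.e.\ $\|x\|_{p^*} = \sup_{\|y\|_p \leq 1} x^\top y$).

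Finally, applying \Cref{lemma:lipschitzUpperBoundOnNormJacobian} to $f$ (which is differentiable and for which $L_f^p(x)$ is an anchored Lipschitz constant by hypothesis) gives $\|Df(x)\|_p \leq L_f^p(x)$. Combining this with the identity established above yields $\|\nabla f(x)\|_{p^*} \leq L_f^p(x)$, as desired. There is no real obstacle here; the only subtle point is the identification of the induced $p \to p$ operator norm of a row vector with the $p^*$-norm of its transpose, which is a direct consequence of the dual norm definition, so the proof is essentially a one-line corollary.
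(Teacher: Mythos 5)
Your proposal is correct and follows essentially the same route as the paper: both reduce the corollary to \Cref{lemma:lipschitzUpperBoundOnNormJacobian} via the identity $Df(x) = \nabla f(x)^\top$ and the observation that the induced $p$-operator norm of this row vector equals $\|\nabla f(x)\|_{p^*}$. The only cosmetic difference is that you compute this operator norm directly from the variational characterization of the dual norm, whereas the paper invokes the general identity $\|A^\top\|_p = \|A\|_{p^*}$; these are the same fact.
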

\begin{proof}
    The proof follows from the fact that $Df(x) = \nabla f(x)^\top$ and that for matrix $A$ and norm $\|\cdot\|_p$, we have $\|A^\top\|_p = \|A\|_{p^*}$.
\end{proof}

\paragraph{Derivation of equation \eqref{eq:descentLemma}}
\phantomsection\label{par:proopOfSecondOrderRadius}
\begin{proof}
    To prove the quadratic upper bound on the logit gap with the anchored Lipschitz constant we utilize the mean value theorem.
   For any $x$ and $\delta$, we can write
    \begin{align*}
        f_{iy}(x+\delta)
        =  f_{iy}(x) + \nabla f_{iy}(x)^\top \delta +  \int_{0}^{1} (\nabla f_{iy}(x+t\delta)-\nabla f_{iy}(x))^\top \delta \: dt.
    \end{align*}
    We can then write
    \begin{align*}
            \begin{split}
                |f_{iy}(x+\delta)-f_{iy}(x)-\nabla f_{iy}(x)^\top \delta| &\leq  \int_{0}^{1} |(\nabla f_{iy}(x+t\delta)-\nabla f_{iy}(x))^\top \delta| dt \\
            \xrightarrow[\frac{1}{p^*} + \frac{1}{p} = 1]{\text{H$\mathrm{\ddot{o}}$lder's Inequality}} & \leq \int_{0}^{1} \|(\nabla f_{iy}(x+t\delta)-\nabla f_{iy}(x)\|_{p^*} \|\delta\|_p dt \\
            &\leq \int_{0}^{1} L_{\nabla f_{iy}}^{p, {p^*}}(x) \|\delta\|_p^2 \: t \: dt \label{eq:descentLemmaLipschitzInequality} \\
            &= \frac{L_{\nabla f_{iy}}^{p, {p^*}}(x)}{2} \|\delta\|_p^2.
            \end{split}
    \end{align*}
    
    Thus, we have 
    \begin{equation*}\label{eq:quadraticUpperBound}
        f_{iy}(x+\delta) \leq  f_{iy}(x) + \nabla f_{iy}(x)^\top \delta + \frac{L_{\nabla f_{iy}}^{p, {p^*}}(x)}{2} \|\delta\|_p^2.
    \end{equation*}
\end{proof}

\paragraph{Proof of \Cref{prop:secondOrderCertifiedRadius}}
\phantomsection\label{par:proofOfCertRad}
\begin{proof}
    Define 
    $\Delta = \{ \delta \: | \: f_{iy}(x + \delta) \leq 0 \}$ 
    and 
    $\overline{\Delta} = \{ \delta \: | \: \overline{f_{iy}}(x, \delta; L_{\nabla f_{iy}}^{p, {p^*}}(x)) \leq 0\}$.
    It is clear that $\overline{\Delta} \subseteq \Delta$.
    Consequently, $\{ \varepsilon \mid  \sup_{\| \delta \|_p \leq \varepsilon} \overline{f_{iy}}(x, \delta; L_{\nabla f_{iy}}^{p, {p^*}}(x)) \leq 0\}  \subseteq \{ \varepsilon \mid  \sup_{\| \delta \|_p \leq \varepsilon} f_{iy}(x + \delta) \leq 0\}$, and thus, \eqref{eq:defenseRadiusRelaxed1}  yields a valid lower bound on the original radius \eqref{eq:defenseRadius}.
    
    Now, to acquire the analytical result we have
    \begin{align}\label{eq:upperBoundSumSup}
        \sup_{\|\delta\|_p \leq \varepsilon} f_{iy}(x) + \nabla f_{iy}(x)^\top \delta + \frac{L_{\nabla f_{iy}}^{p, {p^*}}(x)}{2} \|\delta\|_p^2
        &= f_{iy}(x) + \varepsilon \|\nabla f_{iy}(x)\|_{p^*} + \frac{L_{\nabla f_{iy}}^{p, {p^*}}(x)}{2} \varepsilon^2. \notag
    \end{align}
    This equality holds due to the fact that for the $\delta^*$ achieving $\sup_{\|\delta\|_p \leq \varepsilon} \nabla f_{iy}(x)^\top \delta $ we have $\|\delta^*\|_p = \varepsilon$, implying that $\delta^*$ is also a maximizer of the other term $\frac{L_{\nabla f_{iy}}^{p, {p^*}}(x)}{2} \|\delta\|_p^2$.

    As a result, we obtain the following optimization problem that is equivalent to \eqref{eq:defenseRadiusRelaxed1}
    \begin{align*}
        \max &\quad \varepsilon \\
        \text{s.t.} &\quad \frac{L_{\nabla f_{iy}}^{p, {p^*}}(x)}{2} \varepsilon^2 + \| \nabla f_{iy}(x) \|_{p^*} \varepsilon + f_{iy}(x) \leq 0, \forall i \neq y.
    \end{align*}
    Using elementary calculations, we obtain the optimal solution
    \[
    \min_{i\neq y} \frac{-\| \nabla f_{iy}(x) \|_{p^*} \! + \! (\| \nabla f_{iy}(x) \|_{p^*}^2 \! - \! 2 L_{\nabla f_{iy}}^{p, {p^*}}(x) f_{iy}(x))^{\frac{1}{2}}}{L_{\nabla f_{iy}}^{p,{p^*}}(x)}.
    \]
\end{proof}
\paragraph{Proof of \Cref{prop:compare}}
\phantomsection\label{par:radiiComp}
\begin{proof}
    Suppose $x$ is correctly classified, i.e., $f_{iy}(x) <0$ for $i \neq y$. To enforce the condition $\underline{\varepsilon}_0^*(x) \leq \underline{\varepsilon}_1^*(x)$ we must have
    \begin{align}
        \underline{\varepsilon}_0^*(x) \leq
        \frac{-\| \nabla f_{iy}(x) \|_{p^*} \! + \! (\| \nabla f_{iy}(x) \|_{p^*}^2 \! - \! 2 L_{\nabla f_{iy}}^{p, {p^*}}(x) f_{iy}(x))^{\frac{1}{2}}}{L_{\nabla f_{iy}}^{p, {p^*}}(x)}, \qquad \forall i \neq y. \notag
    \end{align}
    Or equivalently, 
    \begin{align}
        L_{\nabla f_{iy}}^{p, {p^*}}(x) \underline{\varepsilon}_0^*(x)^2 + 2\| \nabla f_{iy}(x) \|_{p^*} \underline{\varepsilon}_0^*(x) +2 f_{iy}(x)\leq 0. \notag
    \end{align}
    The above inequality holds if and only if
    \begin{align}\label{eq:curvatureBetterThanLipschitzCondition}
        L_{\nabla f_{iy}}^{p, {p^*}}(x) \leq \frac{-2(\| \nabla f_{iy}(x) \|_{p^*}\underline{\varepsilon}_0^*(x) + f_{iy}(x))}{\underline{\varepsilon}_0^*(x)^2}.
    \end{align}
    Utilizing \Cref{lemma:lipschitzAndGradientRelation}, we note that  $ \underline{\varepsilon}_0^*(x) = \min_{i \neq y} \frac{-f_{iy}(x)}{L_{f_{iy}}^p(x)} \leq \frac{-f_{iy}(x)}{L_{f_{iy}}^p(x)} \leq \frac{-f_{iy}(x)}{\| \nabla f_{iy}(x) \|_{p^*}}$.
    This ensures that the R.H.S. of \eqref{eq:curvatureBetterThanLipschitzCondition} is positive.
    
\end{proof}

\paragraph{Proof of \Cref{prop:equivalenceOfAttackAndDefence}}
\begin{proof}
    We prove this in two steps:
    \begin{enumerate}
        \item ${\varepsilon}^*(x) \leq {\varepsilon^{'}}^*(x)$: Suppose this does not hold. Then $\varepsilon^*(x) > {\varepsilon^{'}}^*(x)$. However, having a solution for \eqref{eq:attackRadius} implies that there exists a pair $\delta$ and $j$ with $\| \delta \|_p \leq {\varepsilon^{'}}^*(x) < \varepsilon^*(x)$ such that $f_{yj}(x + \delta) < 0$. This perturbation-index pair is then a violation for the constraint of problem \eqref{eq:defenseRadius}. Thus, we must have ${\varepsilon}^*(x) \leq {\varepsilon^{'}}^*(x)$.
        \item ${\varepsilon}^*(x) \geq {\varepsilon^{'}}^*(x)$: Similarly, if this does not hold, then $\forall i\neq y, \delta, \| \delta \|_p \leq {\varepsilon}^*(x) < {\varepsilon^{'}}^*(x)$ we have 
        $\sup_{\| \delta \|_p \leq \varepsilon^*(x)} f_{iy}(x + \delta) \leq 0$. But having a solution for \eqref{eq:attackRadius} asserts that there exists one such index that $f_{yi}(x + \delta) < 0$. Thus, ${\varepsilon}^*(x) \geq {\varepsilon^{'}}^*(x)$ must hold.
    \end{enumerate}
    As a result, we must have ${\varepsilon}^*(x) = {\varepsilon^{'}}^*(x)$.
\end{proof}

\paragraph{Proof of \Cref{prop:AttackCert}}
\begin{proof}
    To prove this proposition, we first find the analytical solution to the inner optimization problems, namely
    \begin{equation}\label{eq:infimumAttackEquation}
        \inf_{\| \delta \|_p \leq \varepsilon} \overline{f_{yi}}(x, \delta; L_{\nabla f_{iy}}^{p, {p^*}}(x)) = \inf_{0 \leq \lambda \leq 1} \inf_{\| \delta \|_p = \lambda \varepsilon} f_{yi}(x) + \nabla f_{yi}(x)^\top \delta + \frac{L_{\nabla f_{yi}}^{p, {p^*}}(x)}{2} \|\delta\|_p^2.
    \end{equation}
    Let $\delta^* = \arg\min_{\|\delta\|_p\leq 1} \nabla f_{yi}(x)^\top \delta$, which yields $\nabla f_{yi}(x)^\top \delta^* = -\|\nabla f_{yi}(x)\|_{p^*}$.
    The inner optimization problem is minimized at $\delta = \lambda \varepsilon \delta^*$.
    Consequently, we can frame problem \eqref{eq:infimumAttackEquation} equivalently as 
    \begin{equation}
        g_{i}^*(\varepsilon) = g_{i}(\varepsilon, \lambda^*) = \min_{0 \leq \lambda \leq 1} \underbrace{f_{yi}(x) -  \varepsilon \|\nabla f_{yi}(x)\|_{p^*} \lambda + \frac{L_{\nabla f_{yi}}^{p, {p^*}}(x)}{2} \varepsilon^2 \lambda^2}_{g_{i}(\varepsilon, \lambda)}.
    \end{equation}
    There are three possible solutions 
    \begin{enumerate}
        \item $\hat{\lambda} = 0$. In this scenario
        \[
        g_{i}(\varepsilon, \hat{\lambda}) = f_{yi}(x).
        \]
        \item $\hat{\lambda} = \dfrac{\|\nabla f_{yi}(x)\|_{p^*}}{L_{\nabla f_{yi}}^{p, {p^*}}(x) \varepsilon}$. 
        In this scenario
        \[
        g_{i}(\varepsilon, \hat{\lambda}) = f_{yi}(x) - \dfrac{\|\nabla f_{yi}(x)\|_{p^*}^2}{2L_{\nabla f_{yi}}^{p, {p^*}}(x)}.
        \]
        For this solution we must have $\hat{\lambda} \leq 1$. This imposes the condition $\dfrac{\|\nabla f_{yi}(x)\|_{p^*}}{L_{\nabla f_{yi}}^{p, {p^*}}(x)} \leq \varepsilon$.
        \item $\hat{\lambda} = 1$. In this scenario we have
        \[
        g_{i}(\varepsilon, \hat{\lambda}) = f_{yi}(x) - \varepsilon \|\nabla f_{yi}(x)\|_{p^*} + \frac{L_{\nabla f_{yi}}^{p, {p^*}}(x)}{2} \varepsilon^2.
        \]
    \end{enumerate}
    Putting together the different conditions, we find that
    \[
    g_{i}^*(\varepsilon) = 
    \begin{cases}
        f_{yi}(x) - \dfrac{\|\nabla f_{yi}(x)\|_{p^*}^2}{2L_{\nabla f_{yi}}^{p, {p^*}}(x)} 
        &,  \dfrac{\|\nabla f_{yi}(x)\|_{p^*}}{L_{\nabla f_{yi}}^{p, {p^*}}(x)} \leq \varepsilon\\
        f_{yi}(x) - \varepsilon \|\nabla f_{yi}(x)\|_{p^*} + \dfrac{L_{\nabla f_{yi}}^{p, {p^*}}(x)}{2} \varepsilon^2 
        &, \dfrac{\|\nabla f_{yi}(x)\|_{p^*}}{L_{\nabla f_{yi}}^{p, {p^*}}(x)} > \varepsilon
    \end{cases}
    \]
    The next step is solving 
    \begin{align*}
        \min &\quad \varepsilon\\
        \text{s.t.} &\quad \min_{i\neq y} g_i^*(\varepsilon) < 0.
    \end{align*}
    For each $i \neq y$, if $\dfrac{\|\nabla f_{yi}(x)\|_{p^*}}{L_{\nabla f_{yi}}^{p, {p^*}}(x)} \leq \varepsilon$ then $g_i^*(\varepsilon) \leq 0$
    requires 
    $2 L_{\nabla f_{yi}}^{p, {p^*}}(x) f_{yi}(x) \leq \| \nabla f_{yi}(x) \|_{p^*}^2$,
    and if $\dfrac{\|\nabla f_{yi}(x)\|_{p^*}}{L_{\nabla f_{yi}}^{p, {p^*}}(x)} > \varepsilon$
    the smallest value of $\varepsilon$ yielding $g_i^*(\varepsilon) \leq 0$ is 
    \begin{equation}\label{eq:singleAttackRadius}
        \frac{\|\nabla f_{yi}(x)\|_{p^*} - \sqrt{\|\nabla f_{yi}(x)\|_{p^*}^2 - 2L_{\nabla f_{yi}}^{p, {p^*}}(x) f_{yi}(x)}}{L_{\nabla f_{yi}}^{p, {p^*}}(x)},
    \end{equation}
    which similarly requires the condition $2 L_{\nabla f_{yi}}^{p, {p^*}}(x) f_{yi}(x) \leq \| \nabla f_{yi}(x) \|_{p^*}^2$ for realizability.
    Consequently, if $2 L_{\nabla f_{yi}}^{p, {p^*}}(x) f_{yi}(x) \leq \| \nabla f_{yi}(x) \|_{p^*}^2$ holds for some $i\neq y$, the smallest valid $\varepsilon$ is given as in \eqref{eq:singleAttackRadius}. 
    Thus, we conclude that 
    \begin{equation}
        \overline{\varepsilon}_1^*(x) = \min_{i \in \mathcal{I}}  \frac{\|\nabla f_{yi}(x)\|_{p^*} - \sqrt{\|\nabla f_{yi}(x)\|_{p^*}^2 - 2L_{\nabla f_{yi}}^{p, {p^*}}(x) f_{yi}(x)}}{L_{\nabla f_{yi}}^{p, {p^*}}(x)},
    \end{equation}
    where $\mathcal{I} = \{ i | i \neq y, 2 L_{\nabla f_{yi}}^{p, {p^*}}(x) f_{yi}(x) \leq \| \nabla f_{yi}(x) \|_{p^*}^2   \}$. If $\mathcal{I} = \varnothing$, the problem is infeasible.

\end{proof}

\paragraph{Proof of \Cref{prop:recursiveJacobianLipschitz}}
\phantomsection\label{par:proofCompCurv}
\begin{proof}
    Writing the definition of Lipschitz continuity, we have
    \begin{align*}
        \begin{split}
            \| Dh(x) - Dh(y) \|_q &= \| Df(g(x))Dg(x) - Df(g(y))Dg(y) \|_q\\
            &= \| Df(g(x))Dg(x) - Df(g(x))Dg(y)  + Df(g(x))Dg(y) - Df(g(y))Dg(y) \|_q\\
             &\leq \|Df(g(x))Dg(x) - Df(g(x))Dg(y)\|_q  + \| Df(g(x))Dg(y) - Df(g(y))Dg(y) \|_q\\
             &\leq \|Df(g(x)) \|_q \|Dg(x) - Dg(y)\|_q  + \| Df(g(x)) - Df(g(y))\|_q \|Dg(y) \|_q\\
             &\leq L_{Dg}^{p, q} \|Df(g(x)) \|_q \| x - y \|_p + L_{Df}^{q', q} \|g(x) - g(y)\|_{q'}\| Dg(y) \|_q\\
             &\leq L_{Dg}^{p, q} \|Df(g(x)) \|_q \| x - y \|_p + L_{Df}^{q', q} L_g^{p, q'} \| Dg(y) \|_q  \| x - y\|_p.
        \end{split}
    \end{align*}
    Based on \Cref{lemma:lipschitzUpperBoundOnNormJacobian}, we note that $L^q_f$ is an upper bound on $\| Df(\cdot) \|_q$ and that $L^q_{g}$ is an upper bound on $\| Dg(\cdot) \|_q$. Thus, we arrive at
    \[
    \| Dh(x) - Dh(y) \|_q \leq \big( L^{p,q}_{Dg} L^q_f  + L^{q', q}_{Df} L_g^{p,q'} L_g^{q} \big) \|x - y\|_p.
    \]
    Finally, by setting $q = p^*$ and $q' = p$, we obtain
    $
    L_{Dh}^{p, p^*} \leq L^{p,p^*}_{Dg} L^{p^*}_f  + L^{p, p^*}_{Df} L_g^{p} L_g^{p^*}$. 
\end{proof}

\paragraph{Proof of \Cref{prop:anchoredIterativeJacobian}}

\begin{proof}
    Taking the same steps as the proof of \Cref{prop:recursiveJacobianLipschitz}, we have
    \begin{align*}
        \begin{split}
            \| Dh(x + \delta) - Dh(x) \|_q 
             &\leq  \|Df(g(x + \delta)) \|_q \|Dg(x + \delta) - Dg(x)\|_q + \| Df(g(x + \delta)) - Df(g(x))\|_q \|Dg(x) \|_q\\
             &\leq L^{p,q}_{Dg}(x) \|Df(g(x + \delta)) \|_q \| \delta \|_p + L^{q', q}_{Df}(g(x)) L^{p, q'}_g(x) \| Dg(x) \|_q  \| \delta \|_p\\
             &\leq \big(L^q_{f} L^{p,q}_{Dg}(x) + \|Dg(x)\|_q L^{q', q}_{Df}(g(x)) L^{p, q'}_g(x) \big) \|\delta \|_p.
        \end{split}
    \end{align*}
    Setting $q = p^*$ and $q' = p$ yields the result.
\end{proof}

\paragraph{Proof of \Cref{prop:naiveJacLip}}
\phantomsection\label{par:proofJacLipLayer}
\begin{proof}
        We drop the superscript $k$ for simplicity here.
        Writing out the definition of Lipschitz continuity we have
        \begin{align}\label{eq:proofNaiveJacLip}
            \begin{split}
                \| Dh(x) - Dh(y) \|_q 
                &= \| G \: \mathrm{diag}(\Phi'(Wx)) W - G \: \mathrm{diag}(\Phi'(Wy)) W \|_q\\
                &\leq \| G \|_q \|W\|_q \| \mathrm{diag}(\Phi'(Wx)) - \mathrm{diag}(\Phi'(Wy)) \|_q\\
                &= \| G \|_q \|W\|_q \max_i | \Phi'(Wx)_i - \Phi'(Wy)_i|\\
                &\leq L_{\phi'}  \| G \|_q \|W\|_q \max_i | W_{i, :}(x - y)|,
            \end{split}
        \end{align}
        where $L_{\phi'} = \max\{|\alpha'|, |\beta'|\}$ is the Lipschitz constant of $\phi'$.
        Next, we can write
            \begin{align*}
                \max_i | W_{i, :}(x - y)| &= \|W(x - y)\|_\infty \leq \| W \|_{p \to \infty} \|x - y\|_p,
            \end{align*}
            Setting $q = p^*$ yields the desired result.

    \end{proof}
\paragraph{Proof of \Cref{prop:jacobianConversionToLinear}}
\begin{proof}
    To perform the conversion to a standard layer, we consider individual entries of the output:
    \[
    Dh^k(x)_{ij} = H^k_{ij} + \sum_{l=1}^{n'_k} G^k_{il}W^k_{lj}\Phi'(W^kx)_l.
    \]
    Thus, by flattening the matrix $Dh^k(x)$ into a vector $dh^k(x)$, where the $ij$-th element of $Dh^k(x)$ is mapped to the $\big((j - 1) \times n_{k + 1} + i\big)$-th element of $dh^k(x)$, $\forall i \in [n_{k+1}],  j \in [n_k]$, we obtain the desired result. The vector $b^k$ and matrix $A^k$ defined in the lemma yield the correct map.
    Importantly, we have the identity $A^k = \hat{G}^{k} \tilde{W}^k$, where
    \[
    \hat{G}^k = \begin{bmatrix}
        G^k & 0 & \cdots & 0\\
        0 & G^k & \cdots & 0\\
        \vdots & \vdots & \ddots & \vdots\\
        0 & 0 & \cdots & G^k
    \end{bmatrix},
    \quad 
    \tilde{W}^k = \begin{bmatrix}
        \mathrm{diag}(W^k_{:, 1})\\
        \mathrm{diag}(W^k_{:, 2})\\
        \vdots\\
        \mathrm{diag}(W^k_{:, n_{k}})\\
    \end{bmatrix}.
    \]
    Evidently, we have $A^k_{ml} = \big( G^k \mathrm{diag}(W^k_{:, j}) \big)_{il} = G^k_{il} W^k_{lj}$.
\end{proof}

\paragraph{Proof of \Cref{prop:lipschitzOfLinearJacobian}}
\begin{proof}
    By vectorization, we have $\|dh^k(x) - dh^k(y)\|_2=\| Dh^k(x) - Dh^k(y)\|_F$, where $\|\cdot\|_F$ is the Frobenius norm. We can write
    \begin{align*}
        \begin{split}
            \|Dh^k(x) - Dh^k(y)\|_2 &\leq \|Dh^k(x) - Dh^k(y)\|_F = \|dh^k(x) - dh^k(y)\|_2 \leq L^p_{dh^k} \|x - y\|_2,
        \end{split}
    \end{align*}
    where we have used the fact that for a given matrix $A$, $\|A\|_2 = \sigma_{\text{max}}(A) \leq \sqrt{\sum_i \sigma_i^2(A)} = \|A\|_F$.
\end{proof}


\paragraph{Proof of \Cref{prop:vectorizedLipschitzIsBetter}}
\begin{proof}
    Using identity $A^k = \hat{G}^{k} \tilde{W}^k$ from the proof of \Cref{prop:jacobianConversionToLinear}, we have
    \begin{align*}
        \begin{split}
            \|A^k\|_2 & \leq \|\hat{G}^k\|_2 \| \tilde{W}^k\|_2 = \|G^k\|_2 \| \tilde{W}^k\|_2.
        \end{split}
    \end{align*}
    For $\| \tilde{W}^k \|_2$  we have $(\tilde{W}^{k\top} \tilde{W}^k)_{ij} = \sum_l \tilde{W}^k_{li} \tilde{W}^k_{lj}$.
    With respect to the sparsity pattern of the matrix $\tilde{W}^k$, $\tilde{W}^{k\top} \tilde{W}^k$ is only non-zero on its diagonal with $(\tilde{W}^{k\top} \tilde{W}^k)_{ii} = \| W_{i, :}\|^2_2$.
    Thus $\| \tilde{W}^k \|_2 = \max_i \| W_{i, :}\|_2 = \| W \|_{2 \rightarrow \infty}$. This concludes the proof.
\end{proof}

\paragraph{Proof of \Cref{prop:analyticalSdp}}
\begin{proof}
    Using \Cref{prop:jacobianConversionToLinear} to vectorize the Jacobian of the layer $h^k(x) = \Phi(W^kx)$, we obtain $A^k = \tilde{W}^k$ (see proof of \Cref{prop:jacobianConversionToLinear}). As stated in the proof of \Cref{prop:vectorizedLipschitzIsBetter}, $A^{k\top} A^k$ is diagonal with $(A^{k\top} A^k)_{ii} = \| W_{i, :}\|_2$.
    Next, we state the semidefinite program of \cite{fazlyab2019efficient} for calculating the Lipschitz constant of $dh^k = A^k \phi'(Wx)$. Define
    \[
    M(\rho, T) = \begin{bmatrix}
                -\alpha' \beta' W^{k\top} T W^k - \rho I & \frac{(\alpha' + \beta')}{2}W^{k\top} T\\
                \frac{(\alpha' + \beta')}{2}TW^k & -T + A^{k\top} A^k
            \end{bmatrix},
    \]
    where $T$ is a diagonal non-negative matrix of appropriate dimensions. We have the following optimization problem.
    \begin{align*}
        \begin{split}
            \min_{\rho, T} &\quad \rho\\
            \text{s.t.} &\quad M(\rho, T) \preceq 0.
        \end{split}
    \end{align*}
    As stated in \cite{fazlyab2019efficient}, for any given feasible pair $(\rho, T)$, $\sqrt{\rho}$ is an upper bound on the Lipschitz constant of the desired map.\\
    We first assume that $\alpha' = -\beta'$, implying that the off-diagonal terms of $M(\rho, T) $ will be zero. As a result,  the linear matrix inequality constraint $ M(\rho, T) \preceq 0$ simplifies to satisfying two semidefinite conditions as follows,
    \[
    \begin{cases}
        \beta^{'2} W^{k\top} T W^k \preceq \rho I,\\
        A^{k\top} A^k \preceq T.
    \end{cases}
    \]
    We claim that the optimal solution for this system of constraints is given by
    \[
    \begin{cases}
        T^* = A^{k\top}A^k,\\
        \rho^* = \beta^{'2} \| W^{k\top} T^* W^k \|_2.
    \end{cases}
    \]
    The choice of $\rho^*$ is trivial. 
    Next, it is easy to see that if we instead use $T' = T^* + E$, where $E$ is another non-negative diagonal matrix, we will have $ W^{k\top} T' W^k = W^{k\top} (T^* + E) W^k = W^{k\top}T^*  W^k + W^{k\top} E W^k \succeq W^{k\top} T^* W^k$, where the last inequality follows as $W^{k\top} E W^k$ is a real symmetric positive semidefnite matrix.This concludes the proof of optimality of the proposed solution for the case in which $\alpha' = -\beta'$. 
    
    Next, we consider the scenario in which $|\alpha'| < \beta'$, we observe that a function that is slope-restricted in $[\alpha', \beta']$ is also slope-restricted in $[-\beta', \beta']$. Consequently, the proposed  $\rho^*$ is a feasible point in this case, although it may not be optimal. The case in which $|\beta'| < | \alpha'|$ follows the same argument, yielding a feasible solution $\rho^* = |\alpha'| \|W^k T^* W^k\|_2$.\\
    Thus, we always have $L^p_{dh^k} \leq L_{\phi'} \| \sqrt{T^*}W^k\|_2=\overline{L}^{p, \text{SDP}}_{dh^k}$.
\end{proof}

\begin{corollary}
     Let $p = 2$, and consider the map $h^k(x) = \Phi(W^k x)$ where the derivative of the $i$th activation function  is slope-restricted in $[\alpha_i', \beta_i']$. Then $L^p_{dh^k}(x) \leq \| D \sqrt{T^*} W^k\|_2$, where $D$ is a diagonal matrix with $D_{ii} = \max\{|\alpha'_i|, |\beta'_i|\}$.
\end{corollary}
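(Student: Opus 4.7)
The plan is to generalize the proof of \Cref{prop:analyticalSdp} by replacing the scalar slope bounds $(\alpha', \beta')$ with coordinate-wise diagonal slope matrices $(A_D, B_D)$, where $(A_D)_{ii} = \alpha_i'$ and $(B_D)_{ii} = \beta_i'$. Since $h^k(x) = \Phi(W^k x)$ is non-residual, the vectorization of \Cref{prop:jacobianConversionToLinear} still yields $dh^k(x) = A^k \Phi'(W^k x)$ with $A^k = \tilde{W}^k$, so $T^* := A^{k\top} A^k$ remains diagonal with entries $\|W^k_{i,:}\|_2^2$. Thus $\sqrt{T^*}$ commutes with any diagonal matrix, which is the key algebraic fact enabling the argument.

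First I would write the per-coordinate incremental quadratic constraint: for each $i$ and any $x,y$, setting $u = W^k x$, $v = W^k y$,
\[
(\phi_i'(u_i) - \phi_i'(v_i) - \alpha_i'(u_i - v_i))(\phi_i'(u_i) - \phi_i'(v_i) - \beta_i'(u_i - v_i)) \leq 0.
\]
Aggregating with a diagonal weight $T \succeq 0$ gives a quadratic form in $(\Delta x,\, \Delta\Phi')$ whose matrix takes exactly the same block structure as $M(\rho, T)$ in the proof of \Cref{prop:analyticalSdp}, but with the scalars $\alpha'\beta'$ and $(\alpha'+\beta')/2$ replaced by the diagonal matrices $A_D B_D$ and $(A_D + B_D)/2$. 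The S-procedure then reduces proving $\| A^k \Delta\Phi'\|_2^2 \leq \rho \|\Delta x\|_2^2$ to finding $(\rho, T)$ with this LMI satisfied.

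Next, I would treat the symmetric case $A_D = -B_D$ first. The off-diagonal blocks vanish (since $A_D + B_D = 0$), and the LMI decouples into $W^{k\top} T B_D^2 W^k \preceq \rho I$ together with $T \succeq A^{k\top} A^k$. Following the same monotonicity argument as in the scalar case, any diagonal perturbation $T = T^* + E$ with $E \succeq 0$ only enlarges $W^{k\top} T B_D^2 W^k$ (because $W^{k\top} E B_D^2 W^k = (B_D \sqrt{E} W^k)^\top (B_D \sqrt{E} W^k) \succeq 0$, using that $B_D, E$ are diagonal and hence commute), so the tightest choice is $T^* = A^{k\top} A^k$, yielding $\rho^* = \|W^{k\top} T^* B_D^2 W^k\|_2 = \|B_D \sqrt{T^*} W^k\|_2^2$.

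Finally, to obtain the bound in the asymmetric case, I would use the elementary observation that a function with derivative slope-restricted in $[\alpha_i', \beta_i']$ is also slope-restricted in $[-D_{ii}, D_{ii}]$ with $D_{ii} = \max\{|\alpha_i'|, |\beta_i'|\}$. The symmetric-case result with $B_D \mapsto D$ then gives $\sqrt{\rho^*} = \|D \sqrt{T^*} W^k\|_2$, and since the anchored Lipschitz constant is dominated by the global one, $L^p_{dh^k}(x) \leq \|D \sqrt{T^*} W^k\|_2$ follows. I expect the only mildly delicate step to be verifying that the monotonicity argument for optimality of $T^*$ survives once $B_D$ is a diagonal matrix rather than a scalar, but as sketched above this reduces to the same positive-semidefiniteness computation and presents no real obstacle.
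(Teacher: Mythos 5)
Your proof is correct and is essentially the intended argument: the paper states this corollary without proof, and it is meant to follow by exactly the coordinate-wise generalization of the proof of \Cref{prop:analyticalSdp} that you carry out — replacing the scalar slopes by diagonal matrices $A_D, B_D$, using that $T^* = A^{k\top}A^k$ is diagonal so everything commutes, decoupling the LMI in the symmetric case, and then relaxing each $[\alpha_i',\beta_i']$ to $[-D_{ii}, D_{ii}]$. The only point worth noting is that the anchored constant $L^p_{dh^k}(x)$ on the left-hand side is most useful when the per-coordinate slope bounds are themselves taken locally at $x$; your closing step (anchored $\leq$ global) proves the statement as literally written, and the same S-procedure derivation also yields the anchored version directly since it bounds $\|dh^k(x)-dh^k(y)\|_2$ with one point fixed.
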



\section{Calculation of Anchored Lipschitz}
In this section, we elaborate on some aspects of the anchored Lipschitz calculation that we introduced in the main text.

Consider a continuously differentiable function $\phi: \mathbb{R} \mapsto \mathbb{R}$.
The anchored Lipschitz constant of $\phi$ at a $x$ is given by
\begin{align}\label{eqn:anchoredLipschitzSolve}
    L_{\phi}(x) =  \max_{y \neq x} \frac{|\phi(y) - \phi(x)|}{|y - x|}. 
\end{align}





This optimization problem can be solved on a case-by-case basis.
For example, for the case of the $\tanh$ function, the maximizer of  \eqref{eqn:anchoredLipschitzSolve} is the point from which the tangent passes through $(x, \phi(x))$. This is given by solving the nonlinear equation
\begin{align} \label{eqn:nonlinearEquation}
    |\phi'(y)| = \lim_{t \to y}\frac{|\phi(t) - \phi(x)|}{|t - x|}.
\end{align}
See \Cref{fig:anch} for reference. A similar idea follows for other bounded activation functions.
We note that \eqref{eqn:nonlinearEquation} is in general a nonlinear equation without a closed-form solution. In practice, we use a numerical method (like bisection) to solve this nonlinear equation at initiation for a grid of points of the real line and then query these values whenever they are needed for Lipschitz calculation.

Next, consider a single residual block as in \eqref{eq:sequential model}
\begin{align}
    h(x) = H x + G \Phi(W x). \notag
\end{align}
With the definition of anchored Lipschitz, one can use the local naive bound for the Lipschitz constant to obtain the following bound on the Lipschitz constant of $h$,
\begin{align}
    \overline{L}^{p, \text{naive}}_{h}(x) = \|H\|_p + \|G\|_p \|\mathrm{diag}(L_{\Phi}(x)) W\|_p, \notag
\end{align}
where $L_{\Phi}(x) = [L_{\phi_1}(x), \cdots, L_{\phi_{n_1}}(x)]$. 
Then $\overline{L}^{p, \text{naive}}_{h}(x)$ would be an upper bound on the anchored Lipschitz of $h$. This can be adapted to LipSDP~\cite{fazlyab2019efficient} or LipLT \cite{fazlyab2023certified}.

The above analysis can extended to multi-layer residual neural networks.  For example, multiplying the layer-wise anchored Lipschitz bounds will yield the anchored naive Lipschitz bound for the whole network. 


\ifx
\section{Curvature Improved Lipschitz Constants}\label{sec:curvImprovedLipschitz}
The local Lipschitz constant of a function $f$ at a region of space $\mathcal{X}$ is defined as a constant $L_f(\mathcal{X})$ such that
\[
\| f(x) - f(y) \| \leq L_f(\mathcal{X}) \|x - y \|, \quad \forall x, y \in \mathcal{X}.
\]
We propose to update the global Lipschitz constant using the Lipschitz constant of the Jacobian of a function.
\begin{theorem}\label{prop:curvatureImprovedLipschitz}
    Let  $f$ be a differentiable function with $L_{Df}$-Lipschitz Jacobian. Consider the $\varepsilon$-ball around $x$, i.e., $B_\varepsilon(x) = \{ y | \| y - x\| \leq \varepsilon \}$.  Then $\|D f(x)\| + L_{Df}\varepsilon$ is a valid upper bound to the local Lipschitz constant of $f$ over $B_\varepsilon(x)$.
\end{theorem}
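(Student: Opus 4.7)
The plan is to combine the fundamental theorem of calculus (applied to the line segment between two arbitrary points in the ball) with a uniform bound on the Jacobian norm throughout the ball, which follows from Lipschitzness of $Df$.

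First, I would take any two points $y, z \in B_\varepsilon(x)$ and write the difference $f(y) - f(z)$ as a line integral of the Jacobian along the segment connecting them:
\[
f(y) - f(z) = \int_0^1 Df(z + t(y - z))(y - z)\, dt.
\]
Taking norms and pulling the norm inside the integral (by the standard Bochner-type inequality for vector-valued integrals) gives
\[
\|f(y) - f(z)\| \leq \left(\int_0^1 \|Df(z + t(y - z))\|\, dt\right)\|y - z\|.
\]

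Next, I would bound the integrand uniformly in $t$. For any $w$ in the segment $\{z + t(y - z) : t \in [0,1]\}$, convexity of $B_\varepsilon(x)$ implies $w \in B_\varepsilon(x)$, so $\|w - x\| \leq \varepsilon$. Using the triangle inequality together with the $L_{Df}$-Lipschitzness of the Jacobian,
\[
\|Df(w)\| \leq \|Df(x)\| + \|Df(w) - Df(x)\| \leq \|Df(x)\| + L_{Df}\|w - x\| \leq \|Df(x)\| + L_{Df}\varepsilon.
\]
Substituting this uniform bound into the integral yields $\|f(y) - f(z)\| \leq (\|Df(x)\| + L_{Df}\varepsilon)\|y - z\|$, which is the claim.

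There is no real obstacle here; the only delicate points are justifying the line-integral representation (which requires $f$ to be continuously differentiable, consistent with the paper's standing assumption on networks with Lipschitz Jacobian) and invoking convexity of the $p$-ball to keep the integration segment inside $B_\varepsilon(x)$. Both are routine, so the statement reduces to a direct application of the triangle inequality on $\|Df(\cdot)\|$.
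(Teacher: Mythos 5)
Your proposal is correct and follows essentially the same route as the paper: the key step in both is the triangle-inequality bound $\|Df(w)\| \leq \|Df(x)\| + L_{Df}\|w-x\| \leq \|Df(x)\| + L_{Df}\varepsilon$ for every $w$ in the ball. The only difference is that you explicitly justify, via the line-integral representation and convexity of the ball, the passage from a uniform Jacobian-norm bound to a Lipschitz bound, whereas the paper simply asserts $L_f(B_\varepsilon(x)) = \sup_{y \in B_\varepsilon(x)} \|Df(y)\|$; your version fills in that routine step.
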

\begin{proof}
    Fixing the point $x$, we write the definition of the Lipschitz constant for the Jacobian of the function for any point $y \in B_\varepsilon(x)$
    \begin{align}
        &\|Df(y) - Df(x)\| \leq L_{Df}\|x-y\| \notag \\
        &\rightarrow \|Df(y)\| - \|Df(x)\| \leq L_{Df}\|x-y\|\notag \\
        &\rightarrow L_f(B_\varepsilon(x)) = \sup_{y \in B_\varepsilon(x)} \|Df(y)\| \leq \|D f(x)\| + L_{Df}\varepsilon. \notag
    \end{align}
    Therefore $\|D f(x)\| + L_{Df}\varepsilon$ is a valid upper bound to the Lipschitz constant of $f$ in $B_\varepsilon(x)$.
\end{proof}

Based on \Cref{prop:curvatureImprovedLipschitz}, given a bound on the Lipschitz constant of the Jacobian of a function $f$, we can update its Lipschitz constant as $\min\{L_f, \|Df(x)\| + L_{Df} \varepsilon \}$ at a point $x$.
This Lipschitz constant will be valid for all points in $B_\varepsilon(x)$.
\fi


\section{Time Complexity}
We analyze the time complexity of \Cref{alg:iterativeCurvatureEstimation} when the $\ell_2$ norm is used ($p=2$).
We utilize the power method to calculate the matrix norms. We assume that we perform only a single loop of power iteration to calculate the matrix norms. This assumption is justified in previous work \cite{fazlyab2023certified, huang2021training}.
We provide the complexity in terms of the number of multiplications. For a general neural network of the form of equation \eqref{eq:sequential model} we have the following calculations:
\begin{itemize}
    \item Lipschitz constant of each residual block, i.e. $\overline{L}^p_{h^k} : \mathcal{O}(n_{k+1}n_k + n_{k+1}n'_k + n'_{k}n_k)$.
    \item Lipschitz constant of the Jacobian of each residual block, i.e., $\overline{L}^p_{Dh^k}:  \mathcal{O}(n_{k+1}n'_k + 2n'_{k}n_k)$ or $\overline{L}^p_{dh^k}:  \mathcal{O}(n_{k+1}n'_k + n_{k+1}n'_{k}n_k)$.
    \item Lipschitz constant of the subnetwork from the first layer to the $(k+1)$-th layer, i.e.,
    $$\overline{L}^p_{k+1}: \mathcal{O}(\sum_{j=0}^k \big[ n_{j+1}n_{j} + n'_{j}n_{j} + \sum_{i=j+1}^k (n_{i+1}n_{i} + n_{i+1}n'_{i} + n'_{i}n_{i}) \big] + \sum_{i=0}^k (n_{i+1}n_{i} + n_{i+1}n'_{i} + n'_{i}n_{i})).$$
\end{itemize}
However, it is worth mentioning that the computational complexity of a forward pass through the network is also a sum of quadratic terms, i.e.,
$
\mathcal{O}( \sum_{k=0}^K n_{k+1}n_k + n_{k+1}n'_k + n'_{k}n_k).
$
Thus, the main bottleneck would be the calculation of $\overline{L}^p_{k+1}$ and $\overline{L}^p_{dh^k}$.
We leverage the specialized GPU implementation of LipLT \cite{fazlyab2023certified}, which substantially reduces the time complexity of calculating $\overline{L}^p_k$, $k=0, \cdots K$. 

Furthermore, by using 1-Lipschitz networks as done in some of the experiments, the time complexity of $\overline{L}^p_{k+1}$ and $\overline{L}^p_{h^k}$ would be $\mathcal{O}(1)$.

\ifx
\section{Bound Comparison}
In this section, utilizing the vectorized representation introduced earlier, we provide a theoretical comparison of our method with \cite{singla2020second} for the case of a two-layer scalar-valued neural network.
%
\begin{proposition}\label{prop:compositionalIsBetter}
    Let $p = 2$, and consider $f(x) = c^\top \Phi(Wx)$, where $W \in \mathbb{R}^{m \times n}$ and $\phi$ is twice differentiable, slope restricted in $[\alpha, \beta]$ with its derivative slope restricted in $[\alpha', \beta']$ with $\alpha'=-\beta'$. Then $\overline{L}^{p}_{df}$ provides a better bound on the norm of the hessian ($\| \nabla^2 f(x)\|_2$) than the method of \cite{singla2020second}, i.e., $\overline{{L}}^{p}_{df} \leq L_{\phi'} \| W \|^2 \| c \|_\infty$. 
\end{proposition}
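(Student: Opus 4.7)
The plan is to recognize that $f(x)=c^\top \Phi(Wx)$ is a special case of the residual block $h(x)=Hx+G\Phi(Wx)$ with $H=0$, $G=c^\top$, output dimension $n_{k+1}=1$, hidden dimension $n'_k=m$, and input dimension $n_k=n$. With $p=2$, this identification lets me invoke \Cref{prop:vectorizedLipschitzIsBetter}, which gives $\overline{L}^{p}_{df}=L_{\phi'}\|A\|_2\|W\|_2$, where $A$ is the vectorization matrix produced by \Cref{prop:jacobianConversionToLinear}. Note also that \Cref{prop:lipschitzOfLinearJacobian} already certifies $\overline{L}^{p}_{df}$ as an upper bound on $\|Df(x+\cdot)-Df(x)\|_2/\|\cdot\|_2$, which in the scalar case coincides with a bound on $\|\nabla^2 f(x)\|_2$, so the claim is truly about the Hessian spectral norm.

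The first concrete step is to instantiate the index formula $A_{ml}=G_{il}W_{lj}$, $m=(j-1)n_{k+1}+i$, from \Cref{prop:jacobianConversionToLinear}. Since $n_{k+1}=1$, we have $i\equiv 1$ and $m=j$, so $A_{jl}=c_l W_{lj}$. This identifies
\[
A = W^\top \mathrm{diag}(c) \in \mathbb{R}^{n\times m}.
\]
The second step is to use submultiplicativity of the spectral norm together with the identity $\|\mathrm{diag}(c)\|_2=\|c\|_\infty$:
\[
\|A\|_2 \;=\; \|W^\top \mathrm{diag}(c)\|_2 \;\leq\; \|W\|_2\,\|\mathrm{diag}(c)\|_2 \;=\; \|W\|_2\,\|c\|_\infty.
\]
Substituting into the expression for $\overline{L}^{p}_{df}$ yields the desired bound
\[
\overline{L}^{p}_{df} \;=\; L_{\phi'}\|A\|_2\|W\|_2 \;\leq\; L_{\phi'}\|W\|^2\|c\|_\infty,
\]
matching the bound from \cite{singla2020second}.

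The main subtlety is correctly unpacking the indexing of \Cref{prop:jacobianConversionToLinear} to recognize $A$ as $W^\top \mathrm{diag}(c)$; after that, the comparison is immediate from submultiplicativity. I would close with the observation that the inequality is generically strict: $\|W^\top \mathrm{diag}(c)\|_2$ equals $\|W\|_2\|c\|_\infty$ only when the scaling by $\mathrm{diag}(c)$ does not suppress the top singular direction of $W^\top$, so whenever the entries $|c_i|$ differ and $W$ has nontrivial energy in the coordinates with smaller $|c_i|$, our bound is strictly sharper than the Singla et al. bound. Finally, the assumption $\alpha'=-\beta'$ is not required for this inequality itself, but it is the regime under which the \cite{singla2020second} bound is derived (and matches the setting in \Cref{prop:analyticalSdp}), making the comparison meaningful.
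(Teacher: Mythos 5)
Your proof is correct and follows essentially the same route as the paper's: both identify the vectorized-Jacobian matrix as $A_{jl}=c_lW_{lj}$, i.e.\ $A=W^\top\mathrm{diag}(c)$, and then bound $\|A\|_2\leq\|W\|_2\|c\|_\infty$ before multiplying by $L_{\phi'}\|W\|_2$. The only difference is cosmetic: the paper establishes that norm bound by expanding $x^\top A^\top A x$ elementwise and substituting $y=\mathrm{diag}(c)x$, which is exactly your one-line submultiplicativity argument written out by hand.
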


\paragraph{Proof of \Cref{prop:compositionalIsBetter}}
\begin{proof}
        
    For $f(x)=c^\top \Phi(Wx)$, the application of \cite{singla2020second} yields the following expression for the Hessian,
    \begin{align}
        \nabla^2 f(x) &\preceq \sum_{i=1}^{m} (\alpha' \min(c_i,0)+ \beta' \max(c_i,0))W_{i,:}^\top W_{i,:} \notag \\
        & = \sum_{i=1}^{m} \beta' |c_i| W_{i,:}^\top W_{i,:} \\
        &= \beta' W \mathrm{diag}(|c|)W^\top
        \notag
     \end{align}
    Therefore, the maximum eigenvalue of the right-hand side is $\beta' \|W\|_2^2 \|c\|_{\infty}$.
    
    the corresponding matrix $A$ of \Cref{prop:jacobianConversionToLinear} is given as
    $A_{jl} = c_l W_{lj}$. To calculate the norm of this matrix, we have for all unit vectors $x$
    \begin{align*}
        \begin{split}
            x^\top A^\top A x &= \sum_{l, k} (A^\top A)_{lk} x_l x_k\\
            &= \sum_{l, k} \sum_j A_{jl} A_{jk} x_l x_k\\
            &= \sum_{l, k} \sum_j W_{lj} W_{kj} c_l x_l c_k x_k\\
           \xrightarrow{y_i = c_i x_i} &= \sum_{l, k} \sum_j W_{lj} W_{kj} y_l y_k\\
           &= \sum_{l, k} (WW^\top)_{lk} y_l y_k \\
           &= y^\top WW^\top y \leq \| W \|_2^2 \|y\|_2^2.
        \end{split}
    \end{align*}
    Next, we note that $y = \mathrm{diag}(c) x$. Consequently,
    \begin{align*}
        \begin{split}
            \| y \|_2 = \| \mathrm{diag}(c) x \|_2 
            &\leq \sup_{\|x \|_2 \leq 1} \| \mathrm{diag}(c) x \|_2 \\
            &= \| \mathrm{diag}(c)\|_2 = \|c\|_\infty.
        \end{split}
    \end{align*}
    As a result, $x^\top A^\top A x \leq \|W\|_2^2 \|c\|^2_\infty$, and thus, $\overline{L}^{p}_{df}=L_{\phi'}\|A\|_2 \|W\|_2 \leq L_{\phi'} \| W\|_2^2 \|c\|_\infty$.
\end{proof}
Although \Cref{prop:compositionalIsBetter}  establishes the advantage of our method over \cite{singla2020second} for a two-layer neural network, this is not necessarily the case for general multi-layer networks; our bounds are not always better for these networks. However, our method applies to more general architectures and activation functions that Hessian-based methods cannot support.
\fi

\section{Experiments}
In this section, we provide the details of our methods and provide further supporting experiments. 

\subsection{Implementation Details}\label{appendix:hyperparameters}
We used three different architectures in our experiments.
We show convolutional layers in the form $C(c, k, s, p)$,
where $c$ is the number of filters, $k$ is the size of the square kernel, $s$ is the stride length, and $p$ is the
symmetric padding. Fully connected layers are of the form $L(n)$, where $n$ is the number of output
neurons of this layer. 
Furthermore, residual layers of the form \eqref{eq:sequential model} is denoted by an extra character `R', i.e., CR and LR for residual convolutional and fully-connected layers, respectively.
The details of our architectures are as follows:
\begin{itemize}
    \item 6C2F: C(32, 3, 1, 1), C(32, 4, 2, 1), C(64, 3, 1, 1), C(64, 4, 2, 1),
C(64, 3, 1, 1) C(64, 4, 2, 1), L(512), L(10).
    \item 6F: L(1024), L(512), L(256), L(256), L(128), L(10).
    \item Lip-3C1F: CR(15, 3, 1, 1), CR(15, 3, 1, 1), CR(15, 3, 1, 1), LR(1024).
\end{itemize}

For the hyperparameter $\lambda$ used in the loss \eqref{eqn:regularizationOfLipschitz}, we employ a primal-dual approach. That is, after each mini-batch we update 
\[
\lambda^+ = \min\{\lambda + \eta (A_c - \varepsilon), \lambda_\text{min}\},
\]
where $\lambda$ is the current value of the regularizer, $\eta$ is a step size, $A_c$ is a moving average of the training accuracy of the mini-batches, $\varepsilon$ is the minimum train accuracy that we expect from the model, and $\lambda_\text{min}$ is the smallest value for the regularizer.
For training on CIFAR-10, we choose $(\eta, \epsilon, \lambda_\text{min})=(0.05, 0.6, 0.01)$.

The rest of the training details are as follows. We use the modified cross-entropy loss function from  \cite{prach2022almost}, 
\begin{equation}
   \mathcal{L}^{\tau, \nu}_\text{CE}(f(x), y) =  \tau \cdot \text{CE}(\frac{f(x) - \nu \sqrt{2}\: \overline{L} u_y}{\tau}, y),
\end{equation}
where $\tau$ is a temperature constant, $u_y$ is the one-hot encoding of the value of $y$, and $\overline{L}$ is an $\ell_2$ Lipschitz constant of the model. $\nu$ is a zero-one variable based on the architecture and mode of training. For the 6C2F and 6F models, as we want regularization directly through the curvature constant, we set $\nu = 0$. For the Lip-3C1F model, we set $\nu = 1$ as the original \cite{araujo2023unified} work.
We use  $\tau = 0.25$.
Furthermore, we train our models for 1000 epochs with a batch size of 256 with a cosine annealing strategy with an initial learning rate of $10^{-4}$ and a final learning rate $10^{-5}$, and report the average results on two seed in \cref{table:main_results}.

\subsection{Per-sample Improvement}
Expanding on the ``Comparison with other Curvature-based Methods'' experiment in \Cref{para:curvatureComparison}, we provide the per-sample improvements of the certified radii in \Cref{fig:persampleImprovement}, corresponding to \Cref{fig:certified_radius}.
\begin{figure}[t!]
    \centering
    \includegraphics[width=0.70\columnwidth]{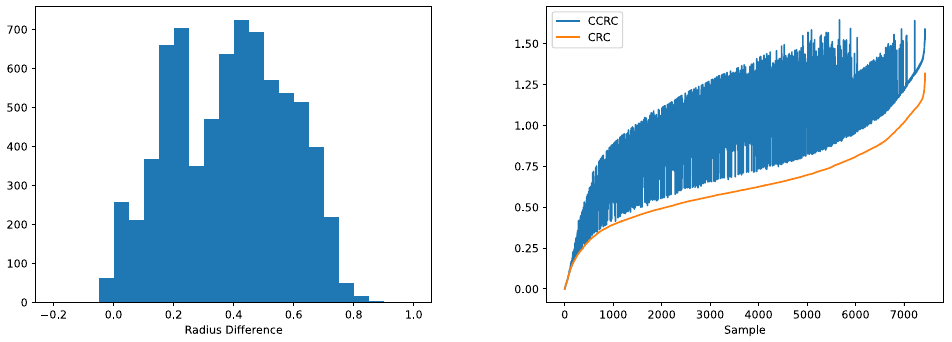}
    \caption{
    Comparison of certified radii acquired via CRC/CCRC on a 6-layer neural network trained via curvature regularization on MNIST. 
    \emph{(Left)} Histogram of per-sample radius improvement of our method over \cite{singla2020second}. 
    \emph{(Right)} Plot of certified radii for correctly classified data.
    The data are sorted according to the  CRC radii.}
    \label{fig:persampleImprovement}
\end{figure}

\subsection{Training with Direct Curvature Regularization}


We observed that for the models that are trained with direct regularization of the curvature, first-order certificates are significantly better than zeroth-order certificates, i.e., by regularizing the model’s curvature, \Cref{prop:compare} would hold for all points of the test dataset. 
This is shown in \Cref{fig:curvatureIsBetterThanLipschitz} for the 6F model.

\begin{figure}[t!]
    \centering
    \includegraphics[width=.80\columnwidth]{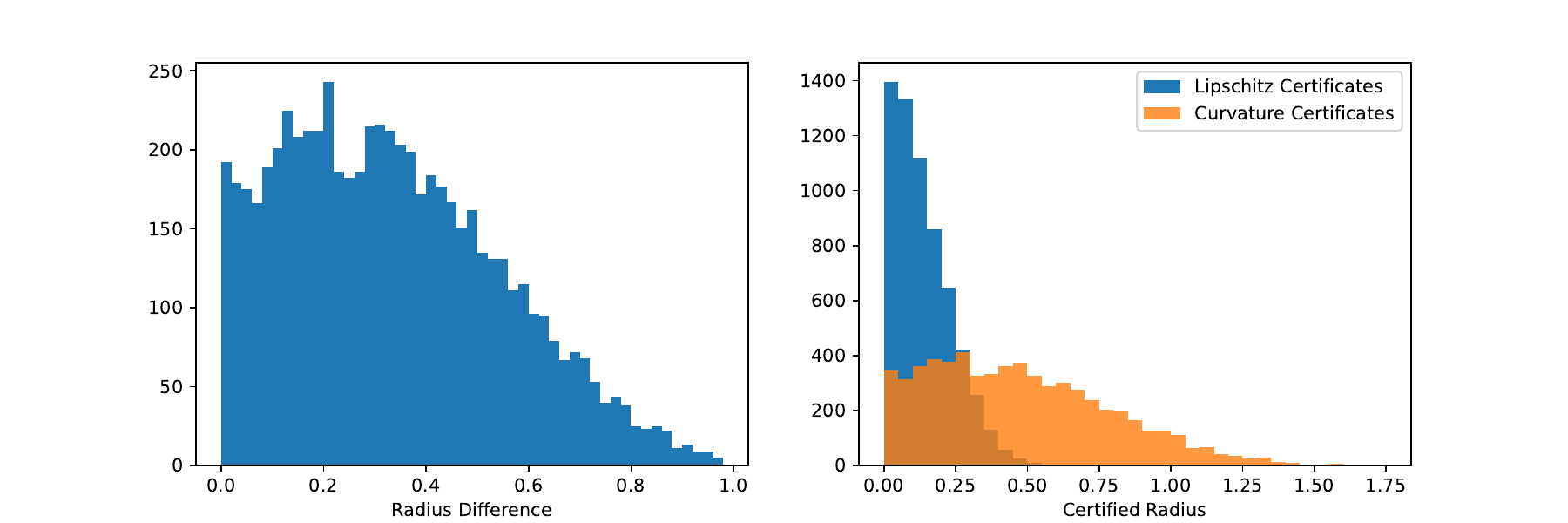}
    \caption{
    Comparison of certified radii calculated through Curvature Certificates versus Lipschitz Certificates for a 6-layer neural network trained via curvature regularization on CIFAR-10.
    \emph{(Left)} Histogram of the per-sample radii improvements.
    \emph{(Right)} Histogram of certified radii.
    }
    \label{fig:curvatureIsBetterThanLipschitz}
\end{figure}

\begin{figure}[t!]
    \centering
    \includegraphics[width=0.4\columnwidth]{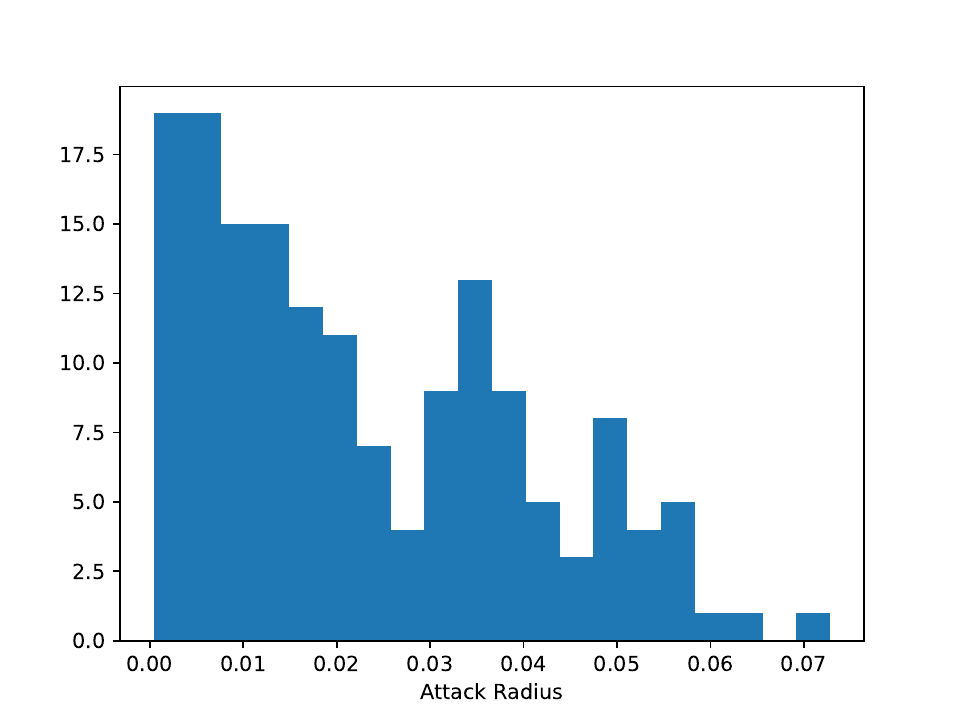}
    \caption{Attack radii certificates for a 1-Lipschitz structure.}
    \label{fig:attackRadius}
\end{figure}

\subsection{Attack Certificates on 1-Lipschitz models}
In this experiment, we provide radii for provable attacks on a 1-Lipschitz model trained on the CIFAR-10 dataset. 
The curvature required for this certificate was calculated using \Cref{alg:iterativeCurvatureEstimation} and utilizing the 1-Lipschitz structure.
\Cref{fig:attackRadius} shows the budget required for a subset of the correctly classified data points that can certifiably be attacked.
We consider the test samples of the CIFAR-10 dataset, which includes 10,000 samples. The model's accuracy on the test set is approximately 50\%, resulting in about 5,000 correctly labeled samples. Of these 5,000 samples, we can provide an attack certificate for approximately 150 of them. This translates to a 3\% success rate (150/5000) for the attack certificate among the correctly classified test samples. It is worth noting that these samples can all be provably misclassified with an attack budget of less than $0.07$, even on 1-Lipschitz networks.

\end{document}